\documentclass{article}

\usepackage{microtype}
\usepackage{graphicx}
\usepackage{subcaption}
\usepackage{booktabs} 

\usepackage{hyperref}
\usepackage{multirow}
\usepackage{dblfloatfix}

\newcommand{\xhdr}[1]{{\noindent\bfseries #1}.}

\usepackage[accepted]{icml2026}

\hypersetup{
	colorlinks=true,       %
	linkcolor=blue,        %
	citecolor=blue,        %
	filecolor=magenta,     %
	urlcolor=blue         
}

\usepackage{amsmath}
\usepackage{amssymb}
\usepackage{mathtools}
\usepackage{thmtools}
\usepackage{amsthm}
\usepackage{algorithmic}
\usepackage[framemethod=TikZ]{mdframed}

\mdfdefinestyle{MyFrame2}{%
    linecolor=white,
    outerlinewidth=1pt,
    roundcorner=1pt,
    innertopmargin=3pt,
    innerbottommargin=2pt,
    innerrightmargin=7pt,
    innerleftmargin=7pt,
    backgroundcolor=black!3!white}

\usepackage[capitalize,noabbrev]{cleveref}

\theoremstyle{plain}

\theoremstyle{definition}

\theoremstyle{remark}

\usepackage[textsize=tiny]{todonotes}

\icmltitlerunning{Coarse-Grained Boltzmann Generators}

\begin{document}

\twocolumn[
      \icmltitle{Coarse-Grained Boltzmann Generators}



  \icmlsetsymbol{equal}{*}

  \begin{icmlauthorlist}
    \icmlauthor{Weilong Chen}{equal,yyy}
    \icmlauthor{Bojun Zhao}{equal,yyy}
    \icmlauthor{Jan Eckwert}{yyy}
    \icmlauthor{Julija Zavadlav}{yyy,comp}

  \end{icmlauthorlist}

  \icmlaffiliation{yyy}{Professorship of Multiscale Modeling of Fluid Materials, Department of Engineering Physics and Computation, TUM School of Engineering and Design, Technical University of Munich, Germany}
  \icmlaffiliation{comp}{Atomistic Modeling Center, Munich Data Science Institute, Technical University of Munich, Germany}

  \icmlcorrespondingauthor{Julija Zavadlav}{julija.zavadlav@tum.de}

  \icmlkeywords{Machine Learning, ICML}

  \vskip 0.3in
]



\printAffiliationsAndNotice{\textsuperscript{*}Equal contribution, random order.}  

\begin{abstract}
Sampling equilibrium molecular configurations from the Boltzmann distribution is a longstanding challenge. Boltzmann Generators (BGs) address this by combining exact-likelihood generative models with importance sampling, but practical scalability is limited. Meanwhile, coarse-grained surrogates enable the modeling of larger systems by reducing effective dimensionality, yet often lack a reweighting procedure required to ensure asymptotically correct statistics. In this work, we propose Coarse-Grained Boltzmann Generators (CG-BGs), a framework for reduced-order generative modeling with importance sampling in coarse-grained coordinate space. CG-BGs generate samples using a flow-based model and reweight them using a learned potential of mean force (PMF). We show that the PMF can be learned from rapidly converged trajectories via enhanced sampling force matching. Experiments demonstrate that CG-BGs capture solvent-mediated interactions in highly reduced representations while substantially reducing computational cost relative to atomistic BGs, providing a practical route toward equilibrium sampling of larger molecular systems.

\end{abstract}

\section{Introduction}
Accurately sampling molecular configurations from the Boltzmann distribution is a central problem in statistical physics~\cite{chandler}. These samples are required for estimating observables and thermodynamic quantities, including  
free energies~\cite{chipotFreeEnergyCalculations2007}. For molecular systems, the high dimensionality of the configuration space makes direct computation of the partition function intractable, forcing a reliance on simulation methods such as Molecular Dynamics (MD) or Markov Chain Monte Carlo (MCMC)~\cite{understandingmd}. However, these methods are often inefficient in systems with rugged energy landscapes. High free energy barriers lead to metastable trapping, producing strongly correlated samples and slow convergence~\cite{lindorff2011fast}. Despite extensive progress in enhanced sampling methods~\cite{zhu2025enhanced,Henin2022} (e.g., umbrella sampling~\cite{Torrie1977}, metadynamics~\cite{laio2008metadynamics}) and coarse-graining~\cite{noidPerspectiveCoarsegrainedModels2013, saunders2013coarse}, equilibrium sampling at scale remains difficult.

Deep generative models have recently emerged as a promising alternative for equilibrium sampling~\cite{noe2019boltzmann,albergo2019flow,Wirnsberger_2020}. A notable example is the Boltzmann Generator (BG)~\cite{noe2019boltzmann}. By learning a diffeomorphic transformation between a simple prior (e.g., a Gaussian) and the complex molecular configuration space, BGs enable efficient proposal generation and exact-likelihood evaluation. The tractable likelihood permits rigorous reweighting of generated configurations to the target Boltzmann distribution via importance sampling, allowing unbiased estimation of equilibrium observables. This has enabled amortized sampling strategies~\cite{tanAmortizedSamplingTransferable2025}, where generation is substantially cheaper than MD simulations.

\begin{figure*}[htbp]
  \centering
  \includegraphics[width=1.0\textwidth]{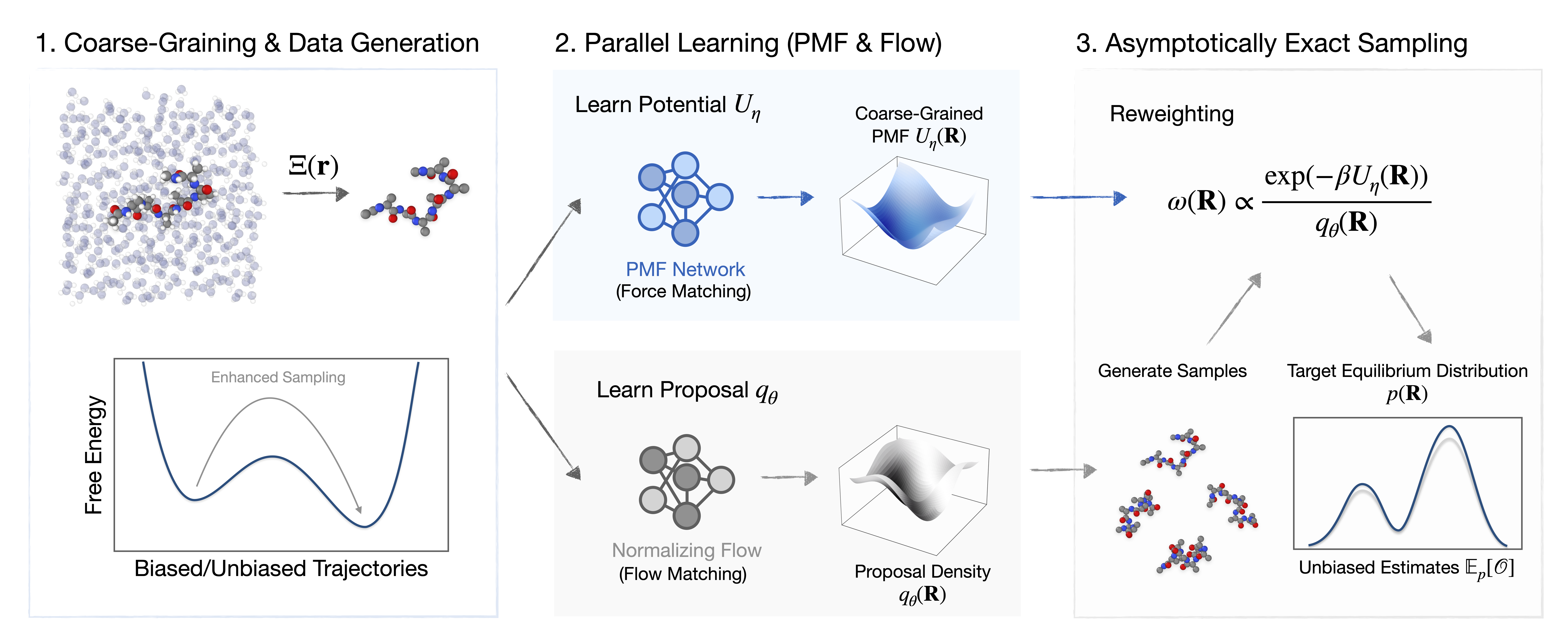}
  \caption{\textbf{CG-BG framework.} Atomistic configurations are mapped to CG coordinates to construct training data. A PMF network learns $U_\eta(\mathbf{R})$ from rapidly converged data, while a normalizing flow learns a proposal density $q_\theta(\mathbf{R})$ over CG configurations. Samples generated from the flow are reweighted using the PMF to recover the target equilibrium distribution $p(\mathbf{R})$, enabling unbiased estimation of thermodynamic observables.}
  \label{fig:cgbg_workflow}
  
\end{figure*}

In practice, however, scaling BGs to high-dimensional molecular systems remains difficult~\cite{transbgs,tanScalableEquilibriumSampling2025}. As system size increases, even expressive generative models exhibit diminishing overlap with the target distribution, resulting in high-variance importance weights and ineffective reweighting. In addition, likelihood evaluation requires Jacobian determinant computation~\cite{chen2018neural}, which introduces significant computational overhead and scales poorly with dimensionality~\cite{hutchinson1989stochastic}. As a result, existing BG applications are largely restricted to small systems such as short peptides~\cite{tanAmortizedSamplingTransferable2025,rehman2025falcon} with empirical implicit solvent models~\cite{hawkins1995pairwise,nguyen2013improved}.

Coarse-graining (CG) provides a complementary approach by projecting atomistic configurations onto a lower-dimensional set of collective variables. This idea underlies Boltzmann Emulators~\cite{lewis2025scalable,zheng2024predicting,jing2024alphafold, zhu2026extending} and related generative surrogates~\cite{schreiner2023implicit,daigavane2025jamun,plainerConsistentSamplingSimulation2025,costa2025accelerating,2410.09667,diezTransferableGenerativeModels2025,vlachas2021accelerated, xu2025tempo}. By reducing the number of degrees of freedom during generation, these methods can be applied to larger systems. However, training requires converged unbiased simulation data, which is difficult to obtain in practice. Thus, CG models are often trained on short finite-time trajectories that do not fully capture the target distribution~\cite{lewis2025scalable, zheng2024predicting} and, unlike Boltzmann Generators, do not incorporate reweighting to correct this mismatch due to the lack of an explicit energy function for the target distribution, resulting in biased statistical estimates.

\xhdr{Present work} In this paper, we introduce \emph{Coarse-Grained} Boltzmann Generators (CG-BGs, Fig.~\ref{fig:cgbg_workflow}), a class of Boltzmann Generators that operate directly in CG coordinates~\footnote{Our code is publicly available at \url{https://github.com/tummfm/cg-bg}.}. CG-BGs combine generative modeling with importance sampling using a learned potential of mean force (PMF) as the target energy, enabling asymptotically correct equilibrium sampling in a reduced-dimensional representation. This design provides a scalable pathway for sampling high-dimensional molecular systems. CG-BGs are particularly advantageous as they can be trained directly from rapidly converged data and effectively capture complex solvent-mediated effects.

Our main contributions are:

\begin{itemize}
    \item We introduce \textbf{\emph{Coarse-Grained} Boltzmann Generators}, a scalable framework for equilibrium sampling in CG coordinate space using \textbf{machine learning potentials} (MLPs) as the target energy for importance sampling.
    \item We show that enhanced sampling force matching enables learning the PMF from rapidly converged simulation trajectories, eliminating reliance on unbiased equilibrium data and providing a correction mechanism for Boltzmann Emulators.
    \item We demonstrate that CG-BGs capture solvent-mediated interactions in highly reduced representations, achieving improved accuracy over classical implicit solvent models while substantially reducing computational cost relative to atomistic BGs.
\end{itemize}

\section{Background and Preliminaries}

\xhdr{Notation} We use lowercase variables for fine-grained (atomistic) quantities and uppercase variables for CG quantities.

We consider a many-body system with configuration $\mathbf{r} \in \mathbb{R}^{n}$ governed by a potential energy function $u(\mathbf{r})$. At thermodynamic equilibrium with temperature $T$, the system follows the Boltzmann distribution
\begin{equation}
    p(\mathbf{r}) = \frac{e^{-\beta u(\mathbf{r})}}{Z}, \quad Z = \int e^{-\beta u(\mathbf{r})} d\mathbf{r},
\end{equation}
where $\beta = (k_B T)^{-1}$ and $Z$ is the partition function. The goal of equilibrium sampling is to generate samples from $p(\mathbf{r})$ in order to compute observables $\mathbb{E}_{p}[\mathcal{O}] = \int \mathcal{O}(\mathbf{r}) p(\mathbf{r}) d\mathbf{r}$, and free energies. A dataset is considered converged if empirical averages of observables match their equilibrium expectations within statistical error.

\subsection{Boltzmann Generators and Emulators}
Boltzmann Generators (BGs)~\cite{noe2019boltzmann} combine exact-likelihood generative models with importance sampling to estimate equilibrium properties. Typically implemented using normalizing flows~\cite{rezende2016variational}, a BG defines a tractable proposal density $q_\theta(\mathbf{r})$ that approximates $p(\mathbf{r})$. Given samples $\mathbf{r}_i \sim q_\theta$, importance weights are computed as
\begin{equation}
 w(\mathbf{r}_i) 
 = \frac{p(\mathbf{r}_i)}{q_\theta(\mathbf{r}_i)} 
 \propto \frac{e^{-\beta u(\mathbf{r}_i)}}{q_\theta(\mathbf{r}_i)} .
\end{equation}
Unbiased estimates of equilibrium expectations are then obtained using the self-normalized importance sampling estimator
\begin{equation}
 \mathbb{E}_p[\mathcal{O}] 
 \approx 
 \frac{\sum_{i=1}^N w(\mathbf{r}_i)\,\mathcal{O}(\mathbf{r}_i)}
      {\sum_{i=1}^N w(\mathbf{r}_i)} .
\end{equation}
Provided that $q_\theta$ overlaps sufficiently with $p$, this estimator converges to the corresponding Boltzmann averages. This allows BGs to be trained on biased or non-equilibrium samples, with reweighting correcting the induced distribution shift at evaluation.

Boltzmann Emulators adopt similar generative architectures but omit the reweighting step, relying directly on $q_\theta$ for estimating observables. Model accuracy is therefore determined by the quality of the learned distribution $q_\theta$, with no correction applied at inference time. This places stronger requirements on the training data: accurate models require long unbiased trajectories, which are difficult to obtain in practice.
\subsection{Continuous Normalizing Flows}
Continuous normalizing flows (CNFs) extend discrete normalizing flows~\cite{dinh2014nice,rezende2016variational} by modeling density transformations as solutions to time-dependent ordinary differential equations (ODEs)~\cite{chen2018neural}. A vector field $v_\theta : [0,1] \times \mathbb{R}^n \to \mathbb{R}^n$, parameterized by a neural network, defines the dynamics
\begin{equation}
\frac{d\mathbf{x}(t)}{dt} = v_\theta(t,\mathbf{x}(t)), 
\qquad 
\mathbf{x}(0) \sim p_0,
\end{equation}
where $p_0$ is a simple prior distribution. The solution at time $t$ is
\begin{equation}
\mathbf{x}(t) = \mathbf{x}(0) + \int_0^t v_\theta(\tau,\mathbf{x}(\tau)) \, d\tau .
\end{equation}
The evolution of the log-density follows the instantaneous change-of-variables formula
\begin{equation}
\log p_t(\mathbf{x}(t)) 
= \log p_0(\mathbf{x}(0)) 
- \int_0^t \nabla \cdot v_\theta(\tau,\mathbf{x}(\tau)) \, d\tau ,
\end{equation}
where $\nabla \cdot v_\theta$ denotes the divergence of the vector field.

While CNFs are often trained using maximum likelihood, Flow Matching (FM)~\cite{lipman2022flow,liu2022flowstraightfastlearning,albergo2023stochastic} provides a simulation-free alternative. Conditional Flow Matching (CFM)~\cite{tong2023improving} directly regresses the neural vector field $v_\theta$ onto a target conditional vector field $u_t(\mathbf{x}\mid z)$ that induces a prescribed probability path. The training objective is
\begin{equation}
\mathcal{L}_{\text{CFM}}(\theta) 
= \mathbb{E}_{t,z,\mathbf{x}\sim p_t(\cdot|z)}
\big[ \| v_\theta(t,\mathbf{x}) - u_t(\mathbf{x}\mid z) \|^2 \big],
\end{equation}
where $t \sim \mathcal{U}[0,1]$ and $z$ is a conditioning variable. A common choice uses linear interpolation between paired source and target samples \cite{albergo2023stochastic,tong2023improving}. Let $z=(\mathbf{x}_0,\mathbf{x}_1)$ with $\mathbf{x}_0$ and $\mathbf{x}_1$ sampled from the source and target distributions, respectively. The interpolated state and corresponding vector field are
\begin{equation}
\mathbf{x}_t = (1-t)\mathbf{x}_0 + t\mathbf{x}_1,
\qquad
u_t(\mathbf{x}_t \mid \mathbf{x}_0,\mathbf{x}_1) = \mathbf{x}_1 - \mathbf{x}_0 .
\end{equation}

\subsection{Coarse-Graining and Potentials of Mean Force}
Coarse-graining maps atomistic configurations $\mathbf{r}\in\mathbb{R}^n$ to a lower-dimensional set of collective variables (CVs), or \emph{beads}, $\mathbf{R}\in\mathbb{R}^N$ with $N\ll n$, through a mapping $\mathbf{R}=\Xi(\mathbf{r})$. The CG variables are typically chosen to retain the slow degrees of freedom. In \emph{bottom-up} coarse-graining~\cite{noid2008multiscale,jinBottomupCoarseGrainingPrinciples2022}, the objective is to construct an effective CG potential such that the CG model reproduces the marginal equilibrium distribution of the atomistic system:
\begin{equation}
p(\mathbf{R})
=
\int p(\mathbf{r})\,
\delta(\Xi(\mathbf{r})-\mathbf{R})\,
d\mathbf{r}.
\end{equation}
This marginal distribution admits a Boltzmann form,
\begin{equation}
p(\mathbf{R})
\propto{e^{-\beta U(\mathbf{R})}},
\end{equation}
where the effective energy $U(\mathbf{R})$, known as the \emph{potential of mean force}, is defined up to an additive constant as
\begin{equation}
U(\mathbf{R})
=
- k_B T
\ln
\int
e^{-\beta u(\mathbf{r})}
\,\delta(\Xi(\mathbf{r})-\mathbf{R})\,
d\mathbf{r}.
\end{equation}
The PMF includes both energetic and entropic contributions from the eliminated degrees of freedom and generally contains many-body, state-dependent interactions~\cite{krishna2009multiscale}.

Evaluating the PMF is intractable in practice due to the high-dimensional integral over $\mathbf{r}$. Classical CG force fields~\cite{marrink2007martini,souza2021martini} approximate $U(\mathbf{R})$ using fixed functional forms, which may lack sufficient expressivity. More recent approaches represent $U(\mathbf{R})$ using neural networks trained by force matching~\cite{noid2008multiscale,wang2019machine} or relative entropy minimization~\cite{shell2008relative,thalerDeepCoarsegrainedPotentials2022}.

\section{Coarse-Grained Boltzmann Generators}
Atomistic BGs permit reweighting-based equilibrium sampling but become difficult to apply to large systems. Boltzmann Emulators improve scalability by omitting reweighting, at the cost of introducing bias.

We introduce CG-BGs, which perform generative modeling and importance sampling directly in CG coordinate space. Instead of the full atomistic distribution, CG-BGs target the marginal distribution $p(\mathbf{R})$
defined by the PMF.

CG-BGs consist of two components: a flow-based model that generates CG configurations and a learned PMF used for importance reweighting. Unlike atomistic MLPs trained on labeled energies~\cite{blank1995neural, behler2007generalized}, CG PMFs cannot be directly evaluated from atomistic configurations because they include entropic contributions from eliminated degrees of freedom. We next describe how the PMF is learned from atomistic simulations and outline the CG-BG workflow (Fig.~\ref{fig:cgbg_workflow}).

\subsection{Variational Force Matching}
\label{subsec:cgmlp}
Variational Force Matching (VFM)~\cite{noid2008multiscale}, also known as multiscale coarse-graining, is a bottom-up approach for learning the PMF from atomistic forces. 

The central condition is that CG forces should match, in expectation, the instantaneous atomistic forces projected onto the CG coordinates, denoted by $\mathcal{F}_{\mathrm{proj}}(\mathbf{r})$. The exact PMF satisfies:
\begin{equation}\label{equ:mean_force}
-\nabla U(\mathbf{R}) 
= \mathbb{E}_{p(\mathbf{r} \mid \mathbf{R})}
\big[ \mathcal{F}_{\mathrm{proj}}(\mathbf{r}) \big].
\end{equation}
where the expectation is taken over the \emph{fiber distribution}~\cite{hummerich2025split}, i.e., the conditional distribution of atomistic configuration $\mathbf{r}$ given $\mathbf{R}$.

From a learning perspective, instantaneous projected forces provide stochastic estimates of the conditional mean force in Eq.~\eqref{equ:mean_force}. Given a dataset $\mathcal{D}$ of atomistic configurations, a parameterized CG potential $U_\eta(\mathbf{R})$ is trained by minimizing
\begin{equation}
\mathcal{L}_{\mathrm{VFM}}(\eta)
= \mathbb{E}_{\mathbf{r}\sim\mathcal{D}}
\Big[
\big\|
\nabla_{\mathbf{R}} U_\eta(\Xi(\mathbf{r}))
+ \mathcal{F}_{\mathrm{proj}}(\mathbf{r})
\big\|_2^2
\Big].
\end{equation}
When $\mathcal{D}$ is sampled from equilibrium, this objective minimizes the Fisher divergence between the model distribution $p_\eta(\mathbf{R})$ and the true marginal $p(\mathbf{R})$. Theoretical error bounds follow from Log-Sobolev inequalities (Proof in §\ref{proof:bound}):

\begin{mdframed}[style=MyFrame2]
\begin{restatable}{proposition}{propcom}
\label{prop:error}
Let $p^*(\mathbf{R}) \propto e^{-\beta U^*(\mathbf{R})}$ be the true marginal and $p_\eta(\mathbf{R}) \propto e^{-\beta U_\eta(\mathbf{R})}$ the learned distribution. If $p^*$ satisfies a Logarithmic Sobolev Inequality (LSI) with constant $\rho > 0$. Then, the Kullback-Leibler divergence between the learned and true distributions is bounded by the expected squared force error:
\begin{equation}
    \mathcal{D}_{\mathrm{KL}}(p_\eta \| p^*) \le \frac{\beta^2}{2\rho} \mathbb{E}_{p_\eta} \left[ \| \nabla U_\eta(\mathbf{R}) - \nabla U^*(\mathbf{R}) \|^2 \right].
\end{equation}
\end{restatable}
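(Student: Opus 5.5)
The plan is to apply the log-Sobolev inequality for $p^*$ directly with the test density $f = p_\eta/p^*$. We use the standard form of LSI with constant $\rho$: for every probability density $\mu$ absolutely continuous with respect to $p^*$,
\begin{equation*}
\mathcal{D}_{\mathrm{KL}}(\mu \,\|\, p^*) \le \frac{1}{2\rho}\, \mathcal{I}(\mu \,\|\, p^*),
\end{equation*}
where the relative Fisher information is
\begin{equation*}
\mathcal{I}(\mu \,\|\, p^*) = \mathbb{E}_{\mu}\big[\|\nabla \log(\mu/p^*)\|^2\big].
\end{equation*}
This is equivalent to the functional form $\mathrm{Ent}_{p^*}(g^2) \le \frac{2}{\rho}\mathbb{E}_{p^*}[\|\nabla g\|^2]$ with $g = \sqrt{\mu/p^*}$. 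The equivalence follows from $\nabla g = \tfrac12 g \nabla\log(\mu/p^*)$, so that $\mathbb{E}_{p^*}\|\nabla g\|^2 = \tfrac14 \mathcal{I}(\mu\|p^*)$. Fixing this normalization at the outset is what makes the constant $1/(2\rho)$ come out, so the first step is to state it explicitly.

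Next we set $\mu = p_\eta$ and compute the log-ratio. Since both densities are Gibbs measures, $\log(p_\eta/p^*) = -\beta U_\eta + \beta U^* + \log(Z^*/Z_\eta)$. The normalizing constants drop out under the gradient, giving
\begin{equation*}
\nabla \log \frac{p_\eta}{p^*}(\mathbf{R}) = -\beta\big(\nabla U_\eta(\mathbf{R}) - \nabla U^*(\mathbf{R})\big).
\end{equation*}
Substituting into the Fisher information gives
\begin{equation*}
\mathcal{I}(p_\eta\|p^*) = \beta^2\, \mathbb{E}_{p_\eta}\|\nabla U_\eta - \nabla U^*\|^2.
\end{equation*}
Plugging this into the LSI yields the claimed bound with constant $\beta^2/(2\rho)$.

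The main obstacle is technical rather than conceptual: justifying that LSI applies to $g=\sqrt{p_\eta/p^*}$. This requires $p_\eta \ll p^*$, which holds since both are strictly positive when $U_\eta, U^*$ are finite. It also requires $g$ to be in the Sobolev class, meaning locally Lipschitz with $\mathbb{E}_{p^*}\|\nabla g\|^2<\infty$. We handle this by assuming $U_\eta$ and $U^*$ are $C^1$ with $Z_\eta<\infty$. If the right-hand side is infinite the inequality is trivial. Otherwise we apply LSI to truncations or smooth cutoffs of $g$ and pass to the limit by monotone convergence and lower semicontinuity of entropy.

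We would also remark that the expectation is taken under $p_\eta$, not under the data distribution. Consequently the VFM loss, which is evaluated on equilibrium data from $p^*$, controls this bound only up to a change of measure. The proposition as stated, however, needs only the LSI computation above.
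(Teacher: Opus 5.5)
Your proposal is correct and takes essentially the same approach as the paper's proof. Both compute $\nabla\log p_\eta-\nabla\log p^*=-\beta(\nabla U_\eta-\nabla U^*)$, so the normalizing constants drop out, identify the relative Fisher information as $\beta^2\,\mathbb{E}_{p_\eta}\|\nabla U_\eta-\nabla U^*\|^2$, and substitute it into the LSI in the form $\mathcal{D}_{\mathrm{KL}}(p_\eta\|p^*)\le\frac{1}{2\rho}\mathcal{J}(p_\eta\|p^*)$. Your additional material goes beyond what the paper spells out but does not change the argument: checking that this normalization matches $\mathrm{Ent}_{p^*}(g^2)\le\frac{2}{\rho}\mathbb{E}_{p^*}\|\nabla g\|^2$, the truncation argument for regularity, and the observation that the expectation is under $p_\eta$ rather than under equilibrium data.
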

\end{mdframed}
While global LSI conditions are strong assumptions for multimodal PMFs~\cite{vempala2019rapid}, this result motivates force matching as a proxy for distributional accuracy. This is relevant for importance sampling, where performance depends on the divergence between the learned and true marginal distributions.

\subsection{Enhanced Sampling for Force Matching}\label{sec:esfm}
Standard force matching requires unbiased converged data, which is expensive to obtain for systems with metastable states and large free energy barriers—a limitation shared by Boltzmann Emulators. In addition, high-energy transition regions are rarely visited under the Boltzmann distribution, yet are important for accurately learning the PMF.

Enhanced sampling force matching (ESFM)~\cite{chen2025enhanced} overcomes these limitations by using invariance of the fiber distribution under coarse-grained biasing.

\newcommand{\PropTitleChen}{\textnormal{(\citet{chen2025enhanced})}\ }

\begin{mdframed}[style=MyFrame2]
\begin{restatable}{proposition}{fiber}
\label{prop:fiber}
\PropTitleChen 
Let \(V(\mathbf{R})\) be a bias potential depending only on the coarse-grained coordinates. The conditional distribution of atomistic configurations given \(\mathbf{R}\) is invariant:
\begin{equation}
p_V(\mathbf{r} \mid \mathbf{R})
=
p(\mathbf{r} \mid \mathbf{R}).
\end{equation}
\end{restatable}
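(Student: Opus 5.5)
The plan is a direct computation with Bayes' rule, using the fact that the bias enters the atomistic density only through $\Xi(\mathbf{r})$. First I define the biased atomistic ensemble,
\begin{equation}
p_V(\mathbf{r}) = \frac{e^{-\beta [u(\mathbf{r}) + V(\Xi(\mathbf{r}))]}}{Z_V}, \qquad Z_V = \int e^{-\beta [u(\mathbf{r}) + V(\Xi(\mathbf{r}))]}\, d\mathbf{r}.
\end{equation}
Its CG marginal is $p_V(\mathbf{R}) = \int p_V(\mathbf{r})\,\delta(\Xi(\mathbf{r})-\mathbf{R})\,d\mathbf{r}$. Inside this integral the delta function lets me replace $V(\Xi(\mathbf{r}))$ by $V(\mathbf{R})$ and pull it out as a constant. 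This gives
\begin{equation}
p_V(\mathbf{R}) = \frac{Z}{Z_V}\, e^{-\beta V(\mathbf{R})}\, p(\mathbf{R}),
\end{equation}
which is the familiar statement that a CG bias simply reweights the marginal.

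Next I write the fiber distribution as the joint density divided by the marginal, $p_V(\mathbf{r}\mid\mathbf{R}) = p_V(\mathbf{r})\,\delta(\Xi(\mathbf{r})-\mathbf{R})/p_V(\mathbf{R})$, and do the same in the unbiased case. In the numerator I again use the delta function to set $e^{-\beta V(\Xi(\mathbf{r}))} = e^{-\beta V(\mathbf{R})}$. The factor $e^{-\beta V(\mathbf{R})}$ then appears in both numerator and denominator, and so does $Z/Z_V$. Cancelling them leaves exactly $p(\mathbf{r})\,\delta(\Xi(\mathbf{r})-\mathbf{R})/p(\mathbf{R}) = p(\mathbf{r}\mid\mathbf{R})$, which is the claim.

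The only delicate step is making the conditional density rigorous. A ratio involving a delta function is formal, and strictly the conditional is a measure on the level set $\Xi^{-1}(\mathbf{R})$. I would handle this with a test-function argument: show that for every bounded measurable $f$,
\begin{equation}
\mathbb{E}_{p_V}[f(\mathbf{r})\mid \Xi=\mathbf{R}] = \mathbb{E}_{p}[f(\mathbf{r})\mid \Xi=\mathbf{R}] \quad \text{for } p_V\text{-a.e. } \mathbf{R}.
\end{equation}
To do this I compute $\mathbb{E}_{p_V}[f(\mathbf{r})\,g(\Xi(\mathbf{r}))]$ for arbitrary $g$. The bias factor $e^{-\beta V(\Xi(\mathbf{r}))}$ is a function of $\Xi$ alone, so it can be absorbed into $g$. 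Uniqueness of conditional expectation then gives the result, and this route needs no coarea formula.

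Finally, I need the well-posedness assumption $Z_V<\infty$, with $V$ finite on the support of $p(\mathbf{R})$ so that the two marginals are mutually absolutely continuous. Under these assumptions the "almost everywhere" qualifiers agree for the two ensembles.
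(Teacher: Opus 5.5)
Your proposal is correct and follows essentially the same route as the paper: you pull $e^{-\beta V(\mathbf{R})}$ out of the delta-function integral to get $p_V(\mathbf{R}) = (Z/Z_V)\,e^{-\beta V(\mathbf{R})}\,p(\mathbf{R})$, then cancel that factor and $Z/Z_V$ in the joint-over-marginal ratio. Your additional argument via test functions and uniqueness of conditional expectation, together with the integrability assumptions, is a sound way to make rigorous the formal delta-function ratio that the paper uses without comment.
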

\end{mdframed}

Since the mean force $-\nabla U(\mathbf{R})$ is an expectation of the projected forces over this conditional distribution (Eq.~\ref{equ:mean_force}), the regression target is unchanged under CG biasing. ESFM minimizes
\begin{equation}
\mathcal{L}_{\mathrm{ESFM}}(\eta)
=
\mathbb{E}_{\mathbf{r} \sim \mathcal{D}_{\mathrm{bias}}}
\Big[
\big\|
\nabla_{\mathbf{R}} U_\eta(\Xi(\mathbf{r}))
+
\mathcal{F}_{\mathrm{proj}}(\mathbf{r})
\big\|_2^2
\Big],
\end{equation}
where $\mathcal{D}_{\mathrm{bias}}$ is a rapidly converged dataset generated using enhanced sampling and $\mathcal{F}_{\mathrm{proj}}(\mathbf{r})$ denotes forces recomputed from the unbiased atomistic potential.

\begin{mdframed}[style=MyFrame2]
\begin{restatable}{proposition}{esfm}
\PropTitleChen 
\label{prop:esfm}
Minimizing \(\mathcal{L}_{\mathrm{ESFM}}\) yields the same global optimum as standard force matching loss $\mathcal{L}_{\mathrm{VFM}}$, assuming sufficient model expressivity.
\end{restatable}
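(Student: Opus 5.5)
We would prove the statement by decomposing both losses with the usual bias--variance (Pythagorean) identity for conditional expectations, and then applying Proposition~\ref{prop:fiber}. Write the atomistic data distribution for VFM as $p(\mathbf{r})$ and for ESFM as $p_V(\mathbf{r}) \propto p(\mathbf{r})e^{-\beta V(\Xi(\mathbf{r}))}$. Factor each through the CG map:
\[
p(\mathbf{r}) = p(\mathbf{R})\,p(\mathbf{r}\mid\mathbf{R}), \qquad p_V(\mathbf{r}) = p_V(\mathbf{R})\,p_V(\mathbf{r}\mid\mathbf{R}),
\]
with $\mathbf{R}=\Xi(\mathbf{r})$ and $p_V(\mathbf{R})\propto p(\mathbf{R})e^{-\beta V(\mathbf{R})}$. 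Since $V$ depends on $\mathbf{r}$ only through $\Xi(\mathbf{r})$, Proposition~\ref{prop:fiber} gives $p_V(\mathbf{r}\mid\mathbf{R})=p(\mathbf{r}\mid\mathbf{R})$.

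Next, set $\bar{\mathcal{F}}(\mathbf{R}) = \mathbb{E}_{p(\mathbf{r}\mid\mathbf{R})}[\mathcal{F}_{\mathrm{proj}}(\mathbf{r})] = -\nabla U(\mathbf{R})$, as in Eq.~\eqref{equ:mean_force}. Inside the ESFM loss, add and subtract $\bar{\mathcal{F}}$ and condition on $\mathbf{R}$. The cross term vanishes because $\nabla U_\eta(\mathbf{R})+\bar{\mathcal{F}}(\mathbf{R})$ is $\mathbf{R}$-measurable, and $\mathcal{F}_{\mathrm{proj}}-\bar{\mathcal{F}}$ has zero conditional mean under $p_V(\mathbf{r}\mid\mathbf{R})=p(\mathbf{r}\mid\mathbf{R})$. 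This yields
\[
\mathcal{L}_{\mathrm{ESFM}}(\eta) = \mathbb{E}_{p_V(\mathbf{R})}\big\|\nabla U_\eta(\mathbf{R}) - \nabla U(\mathbf{R})\big\|^2 + \mathbb{E}_{p_V(\mathbf{R})}\,\mathrm{tr}\,\mathrm{Cov}_{p(\mathbf{r}\mid\mathbf{R})}\big[\mathcal{F}_{\mathrm{proj}}\big].
\]
The identical computation with $p_V(\mathbf{R})$ replaced by $p(\mathbf{R})$ gives the same decomposition for $\mathcal{L}_{\mathrm{VFM}}$. In both losses the second term does not depend on $\eta$.

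It follows that each loss is minimized exactly when $\nabla U_\eta = \nabla U$ almost everywhere with respect to its CG marginal. Under sufficient expressivity, some $\eta^*$ realizes $U_{\eta^*}=U$ up to an additive constant, and this $\eta^*$ attains zero in the first term of both losses simultaneously. So it is a global minimizer of both.

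The main obstacle is the converse: showing that the sets of minimizers coincide, not merely that they share a point. We handle it by noting that $p_V(\mathbf{R})$ and $p(\mathbf{R})$ are mutually absolutely continuous whenever $V$ is finite. Their density ratio $e^{-\beta V}Z/Z_V$ is strictly positive, so "$\nabla U_\eta=\nabla U$ a.e." carries the same meaning under either measure. Hence both losses have the same minimizers, equal to $U$ up to a constant on the common support. We would state explicitly the assumptions of finite $V$, integrable forces, and a time-averaged bias in the metadynamics case. Matching only on the common support is unavoidable, and this is exactly why enhanced sampling, which enlarges effective coverage, is beneficial in practice.
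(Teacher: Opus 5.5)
Your proposal is correct and takes the same route as the paper. The paper defers to Chen et al.\ and justifies the statement only by the one-line remark that Proposition~\ref{prop:fiber} leaves the conditional mean force, i.e.\ the $L^2$ regression target, unchanged; your Pythagorean decomposition and the mutual absolute continuity of $p_V(\mathbf{R})$ and $p(\mathbf{R})$ make rigorous why the differently weighted CG marginals still give the same set of minimizers. One small refinement: for a time-dependent metadynamics bias the data marginal is a mixture $\int \pi(t)\,p_{V_t}(\mathbf{R})\,dt$, not a Boltzmann factor of a time-averaged bias, but every component shares the fiber $p(\mathbf{r}\mid\mathbf{R})$, so your argument goes through with this mixture as the CG marginal.
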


\end{mdframed}

Together, Propositions~\ref{prop:fiber} (Proof in §\ref{proof:fiber}) and~\ref{prop:esfm} establish that ESFM enables accurate PMF learning from biased enhanced sampling data, benefiting from faster convergence and improved coverage of transition regions.

\subsection{The CG-BG Workflow}
\label{subsec:reweighting}
After training, the learned PMF $U_\eta$ defines the target energy for importance sampling, rather than as a force field for MD integration. Let $q_\theta(\mathbf{R})$ denote the density induced by the trained flow model. Importance weights are computed as
\begin{equation}
w(\mathbf{R})
\propto
\frac{\exp(-\beta U_\eta(\mathbf{R}))}{q_\theta(\mathbf{R})}.
\end{equation}
Provided $U_\eta$ accurately approximates the true PMF on the support of $q_\theta$, reweighting gives unbiased estimates under $p(\mathbf{R})$ in the idealized setting where the PMF is exact. In practice, the learned PMF introduces approximation bias.

The reliability of importance reweighting is quantified by the normalized effective sample size (ESS)~\cite{kishDesignEffect},
\begin{equation}
\mathrm{ESS}
=
\frac{1}{B}
\frac{\left(\sum_{i=1}^B w(\mathbf{R}_i)\right)^2}
     {\sum_{i=1}^B w(\mathbf{R}_i)^2}.
\end{equation}
where $B$ denotes the number of generated samples. The normalized ESS takes values in $(0,1]$, with larger values indicating better overlap between the generative density $q_\theta$ and the target density defined by $U_\eta$. In practice, machine learning potentials may exhibit unphysical extrapolation outside the training domain, and generative models may occasionally produce high-energy artifacts, both of which can lead to weight degeneracy. To improve robustness, we apply a weight clipping strategy~\cite{tanScalableEquilibriumSampling2025, gloy2025hollowflow, moqvist2025thermodynamic} to truncate statistical outliers before computing expectations (See §\ref{apd:ablation}). The complete CG-BG training and sampling pipeline is summarized in Fig.~\ref{fig:cgbg_workflow}.
\begin{figure*}[!ht]
  \centering
    \includegraphics[width=0.99\textwidth]{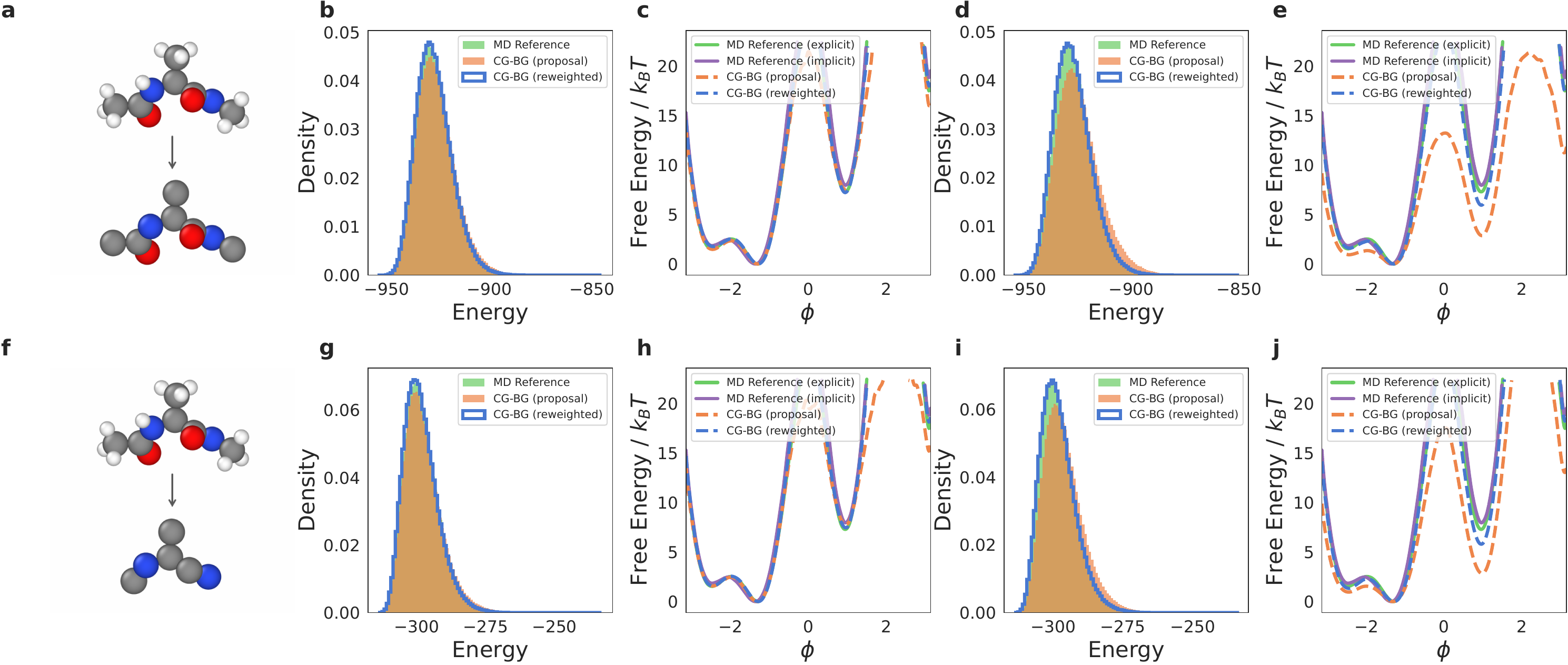}
  \caption{
    \textbf{CG-BGs on alanine dipeptide.}
    (a--e) Heavy Atom mapping results. (a) Heavy Atom mapping. (b) Potential energy distribution under the learned PMF for flow trained on 500~ns \emph{unbiased} data, before and after reweighting, compared with MD reference. (c) $\phi$ dihedral free energy profile for the same model. (d) Energy distribution for flow trained on 10~ns \emph{WT-MetaD} ($\gamma=1.5$) data. (e) Corresponding $\phi$ dihedral free energy profile.
    (f--j) Core Beta mapping results are shown in the second row with the same structure as (a--e).
    }
  \label{fig:ala2_joint}
\end{figure*}

\begin{figure*}[!b]
  \centering
  \includegraphics[width=0.99\textwidth]{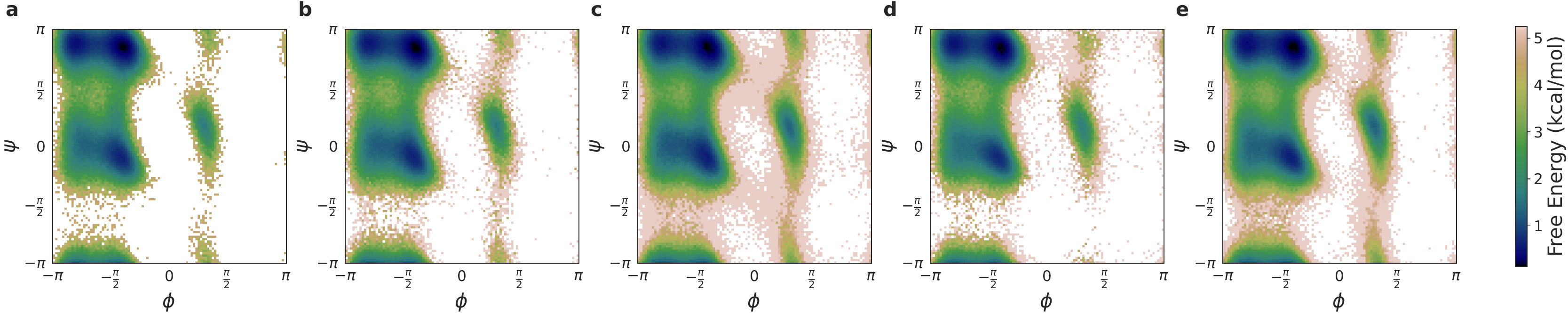}
    \caption{
    \textbf{Ramachandran plots of alanine dipeptide.}
    (a) MD reference. 
    (b,c) Heavy Atom mapping: reweighted Ramachandran distributions. In (b), the flow model is trained on 500~ns \emph{unbiased} data; in (c), it is trained on 10~ns \emph{WT-MetaD} ($\gamma=1.5$) data. 
    (d,e) Core Beta mapping: same training setups as (b,c). Unreweighted Ramachandran plots are provided in Fig.~\ref{fig:ala2_rama_app}.
    }
  \label{fig:ala2_rama}
\end{figure*}

\section{Experiments}
\label{others}
We evaluate CG-BGs on the Müller–Brown (MB) potential and three alanine peptide systems, including alanine dipeptide (Ac-Ala-NHMe, 22 atoms), alanine tripeptide (Ac-Ala$_3$-NHMe, 42 atoms), and alanine hexapeptide (Ac-Ala$_6$-NHMe, 72 atoms). Additional experimental details including architectures are provided in §\ref{cnf_experiments} and §\ref{fm_experiments}. CG-BG samples are generated and reweighted following the algorithms described in §\ref{algorithms}.

\xhdr{Datasets}
For all systems, we construct unbiased and biased datasets for training and evaluation. Biased datasets are generated using enhanced sampling methods to accelerate exploration. For the MB system, data are generated via Langevin dynamics (§\ref{apx:mb_dataset}), with umbrella sampling~\cite{Torrie1977} used in the biased setting to improve transitions between metastable basins. For peptide systems, datasets are generated using both explicit and implicit solvent models (§\ref{apx:alanine}). Explicit solvent data are produced using a classical force field~\cite{lindorff2010improved} and include long unbiased MD trajectories as well as biased simulations obtained via well-tempered metadynamics~\cite{barducci2008well}. Implicit solvent data are generated using the same force field in combination with a generalized Born model under different parameterizations (OBC1 and OBC2)~\cite{onufriev2004exploring}. For explicit solvent simulations, configurations are further coarse-grained using either a \emph{Heavy Atom} mapping (Fig.~\ref{fig:ala2_joint}a and Fig.~\ref{fig:ala3_and_ala6}a), which retains all heavy atoms, or a \emph{Core Beta} mapping (Fig.~\ref{fig:ala2_joint}f and Fig.~\ref{fig:ala3_and_ala6}f), which retains backbone atoms and the $C_\beta$ position. Full details on dataset generation and enhanced sampling procedures are provided in §\ref{datasets}.

\xhdr{Baselines}
Unlike previous BG work~\cite{transbgs, tanScalableEquilibriumSampling2025, tanAmortizedSamplingTransferable2025}, which often treats implicit solvent simulations as reference, we use explicit solvent simulations as the primary reference and treat empirical implicit solvent models as baselines. We additionally report results from atomistic BGs, including TarFlow and ECNF++~\cite{tanAmortizedSamplingTransferable2025} trained on implicit solvent simulation data.

\xhdr{Metrics} 
We report ESS, Jensen-Shannon (JS) divergence, and PMF error. JS divergence is computed between the sampled and reference dihedral angle free energy profiles. The PMF error is defined as the squared distance between the negative logarithms of the sampled and reference densities, placing additional emphasis on low-probability regions compared to JS divergence~\cite{plainerConsistentSamplingSimulation2025, durumericLearningDataEfficient2024}. Energy histograms and free energy profiles of $\phi$ dihedral are shown in the main text, while additional results, including $\psi$ dihedral free energy profiles, Ramachandran plots~\cite{ramachandran1963stereochemistry}, bond length distributions, and weight clipping ablations, are provided in §\ref{additional_results}.

\subsection{Recovering Equilibrium Distributions from Biased and Unbiased Data}\label{sec:result_first}
We first demonstrate that CG-BGs inherit the importance reweighting capability of atomistic BGs, enabling recovery of equilibrium statistics from flow models trained on either biased or unbiased trajectories.

For the MB system (Fig.~\ref{fig:mb_pes}), coarse-graining corresponds to projection onto the $x$-coordinate, yielding an analytically exact reference $U(x) = -k_B T \ln \int \exp\!\big(-\beta u(x,y)\big)\, dy$ (Fig.~\ref{fig:mb_pes}c–d). For peptides, we use two CG mappings depending on system size: the Heavy Atom mapping for alanine dipeptide and tripeptide (Fig.~\ref{fig:ala2_joint}a and Fig.~\ref{fig:ala3_and_ala6}a), and the Core Beta mapping for hexapeptide (Fig.~\ref{fig:ala3_and_ala6}f). Throughout §\ref{sec:result_first} and §\ref{sec:result_second}, the PMF is learned from rapidly converged datasets using ESFM (§\ref{sec:esfm}).

As shown in Fig.~\ref{fig:ala2_joint}c, Fig.~\ref{fig:ala3_and_ala6}c,~\ref{fig:ala3_and_ala6}h and Fig.~\ref{fig:mb_pes}c, despite training on long unbiased datasets and using expressive models, the raw flow proposals deviate from the MD reference. These discrepancies arise from low-quality samples generated by the flow model, a limitation intrinsic to Boltzmann Emulators that cannot be systematically corrected without reweighting. After reweighting, CG-BGs successfully recover equilibrium free energy profiles in close agreement with MD references. 
The reweighted distributions accurately reproduce relative basin populations and match the reference distribution in transition regions. This is further illustrated by the Ramachandran plots (Fig.~\ref{fig:ala2_rama}, Fig.~\ref{fig:ala3_and_ala6}e and Fig.~\ref{fig:ala3_and_ala6}j). Unreweighted proposals exhibit noisy samples across transition areas and metastable basins (Fig.~\ref{fig:ala2_rama_app} and Fig.~\ref{fig:ala3_ala6_rama_app}), which are assigned low importance weights and effectively filtered out after reweighting. Quantitative metrics in Tab.~\ref{tab:alanine_results} confirm close agreement between reweighted samples and target equilibrium ensemble.

We further evaluate CG-BGs trained on biased or short non-equilibrium trajectories (umbrella sampling for MB and 10~ns WT-MetaD with $\gamma=1.5$ for alanine dipeptide), reflecting realistic settings where long unbiased simulations are infeasible. As shown in Fig.~\ref{fig:ala2_joint} and Fig.~\ref{fig:mb_pes}f, while the flow proposals exhibit larger deviations from the MD reference, importance reweighting consistently recovers accurate equilibrium statistics (Tab.~\ref{tab:alanine_results}).

\begin{figure*}[ht!]
  \centering
  \includegraphics[width=0.99\textwidth]{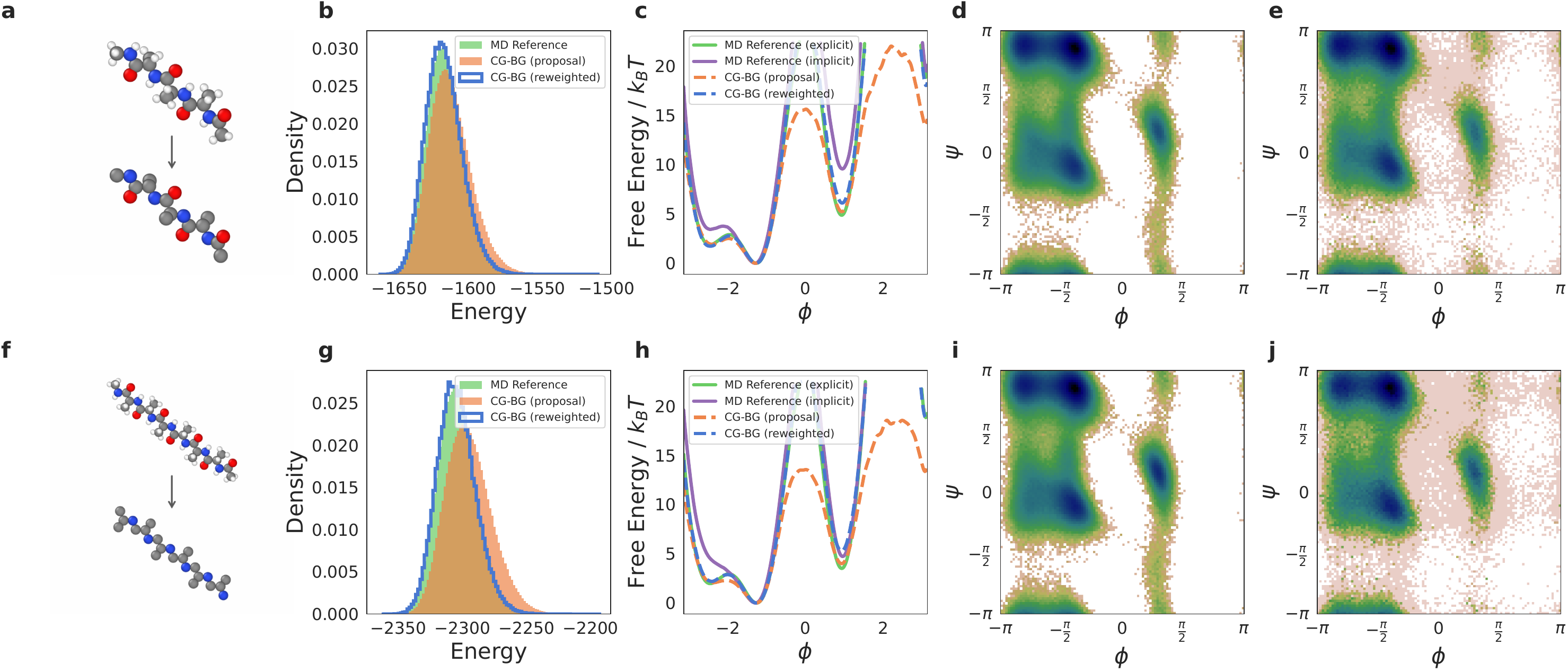}
    \caption{
    \textbf{CG-BGs on alanine tripeptide and hexapeptide.}
    (a--e) Alanine tripeptide (Heavy Atom mapping): (a) CG representation, (b) potential energy distribution, (c) $\phi$ free energy profile, (d) MD reference Ramachandran plot, and (e) reweighted Ramachandran distribution. 
    (f--j) Alanine hexapeptide (Core Beta mapping): (f) CG representation, (g) potential energy distribution, (h) $\phi$ free energy profile, (i) MD reference Ramachandran plot, and (j) reweighted Ramachandran distribution. Unreweighted Ramachandran plots are shown in Fig.~\ref{fig:ala3_ala6_rama_app}. The reported free energy profiles and Ramachandran plots correspond to the dihedral pair biased during WT-MetaD: the second pair (of three) for alanine tripeptide and the third pair (of six) for alanine hexapeptide, counting from the N-methyl terminus.
    }
  \label{fig:ala3_and_ala6}
\end{figure*}

Notably, after reweighting, CG-BGs outperform implicit solvent MD baselines (Tab.~\ref{tab:alanine_results}), highlighting the advantage of learning PMFs from explicit solvent simulations. While implicit solvent models perform reasonably well for alanine dipeptide, we observe increasing differences for tripeptide (Fig.~\ref{fig:ala3_and_ala6}c) and hexapeptide (Fig.~\ref{fig:ala3_and_ala6}h), consistent with previous observations for more complex molecular systems~\cite{Chen2021}. This constitutes an improvement over atomistic BG approaches, which rely on implicit solvent models for reweighting and are therefore fundamentally limited by solvent approximation error. In other words, atomistic BGs can at best achieve the accuracy of implicit solvent baselines (Tab.~\ref{tab:alanine_results} and Tab.~\ref{tab:ablation_vertical}). These results show that CG-BGs generate CG equilibrium samples consistent with the target distribution, without requiring long unbiased MD simulations, and offer a practical route to correct systematic biases in existing Boltzmann Emulators.

\begin{table}[b!]
    \centering
    \caption{Training and inference time across different CG mappings on alanine dipeptide. Inference times correspond to $10^4$ generated samples. \emph{All Atom} denotes generating the full solute configuration without solvent.}
    \begin{tabular}{cccc}
        \toprule
        Stage & \textbf{Core Beta} & \textbf{Heavy Atom} & \textbf{All Atom}  \\
        \midrule
        Training & 0.45h & 0.80h & 2.55h\\
        Inference & 0.95min & 3.78min & 14.91min \\
        \midrule
        Total & 0.47h & 0.86h & 2.80h \\
        \bottomrule
    \end{tabular}
    \label{tab:computation}
\end{table}

\subsection{Effect of Coarse-Graining Resolution on Accuracy and Efficiency}\label{sec:result_second}
We next examine how the choice of CG resolution affects both sampling quality and computational efficiency. To this end, we consider a coarser Core Beta mapping for alanine dipeptide (Fig.~\ref{fig:ala2_joint}f) and tripeptide (Fig.~\ref{fig:ala3_cb_app}a). Despite the reduced resolution, CG-BGs trained with the Core Beta mapping remain capable of recovering equilibrium statistics after reweighting (Fig.~\ref{fig:ala2_joint}h,~\ref{fig:ala2_joint}j and Fig.~\ref{fig:ala3_cb_app}c), with quantitative metrics reported in Tab.~\ref{tab:alanine_results} and Tab.~\ref{tab:ala3_ala6_results}. 

Compared to the Heavy Atom mapping, the lower-dimensional Core Beta representation generally yields higher ESS, indicating improved overlap between the flow proposal and the target distribution. This is expected, as generative modeling and importance sampling become easier in lower-dimensional spaces. Nevertheless, after reweighting, the resulting equilibrium statistics are generally less accurate than those obtained with the Heavy Atom mapping. A likely explanation is the increased degeneracy introduced by coarse-graining. For a given CG coordinate $\mathbf{R}$, many atomistic microstates ${\mathbf{r}_i}$ satisfy $\Xi(\mathbf{r}_i)=\mathbf{R}$ while exerting different projected forces $\mathcal{F}_{\mathrm{proj}}(\mathbf{r}_i)$. As a result, the conditional mean force $\mathbb{E}_{p(\mathbf{r} \mid \mathbf{R})}\!\left[\mathcal{F}_{\mathrm{proj}}(\mathbf{r})\right]$ exhibits larger variance under coarser mappings, increasing the difficulty of accurately learning the PMF through force matching. This effect is expected to become more pronounced as molecular complexity increases~\cite{gorlich2025mapping}.

We further benchmark the computational cost of CG-BGs across different resolutions (Tab.~\ref{tab:computation}). As expected, the Core Beta mapping substantially reduces both training and inference time compared to the Heavy Atom and full atomistic representations, with particularly large gains at inference due to cheaper Jacobian evaluations. Note that the reported all atom baseline generates only solute coordinates; generating full configurations with explicit solvent, which would be the proper reference, is computationally infeasible. Overall, the results illustrate a trade-off between statistical efficiency and representation fidelity: coarser mappings improve proposal quality and computational efficiency, while finer mappings show more accurate equilibrium estimates after reweighting. Additional comparisons with prior work are provided in Tab.~\ref{tab:apx_cnf_cost}.

\subsection{Simulation-Free Evaluation of Learned PMFs}
Beyond equilibrium sampling, CG-BGs provide a form of amortized equilibrium benchmarking for learned PMFs. Whereas conventional BGs use a learned proposal distribution to estimate observables under a fixed target energy, the same proposal distribution can also be reused to evaluate and compare multiple candidate PMFs through importance reweighting. Once a sufficiently accurate proposal has been learned, equilibrium observables under different PMFs can be estimated from a single set of generated samples, without performing additional simulations.

Specifically, the flow model generates CG configurations from a proposal distribution approximating the equilibrium ensemble, and importance reweighting maps these samples to the Boltzmann distribution induced by a given PMF. This enables rapid, simulation-free assessment of candidate CG potentials, in contrast to traditional validation pipelines that require separate MD simulations for each model.

We leverage this capability to compare PMFs trained under different data regimes, contrasting models learned from long unbiased MD trajectories ($\mathrm{PMF}_U$) with those trained on a rapidly converged biased dataset ($\mathrm{PMF}_B$). Quantitative metrics from reweighted samples (Tab.~\ref{tab:alanine_results}) provide a direct measure of model accuracy, while dihedral free energy profiles and Ramachandran plots enable visual comparison with atomistic references. Consistent with previous observations~\cite{chen2025enhanced,gorlich2025mapping}, $\mathrm{PMF}_U$ (Fig.~\ref{fig:ala2_density}) fails to recover the correct metastable populations along $\phi$, whereas $\mathrm{PMF}_B$ exhibits improved agreement.

Although demonstrated here for CG models, the same principle naturally extends to atomistic machine learning potentials. More generally, generated configurations can be reused to compare multiple learned energy functions without additional simulations, making CG-BGs a practical tool for rapid model validation.
{\setlength{\tabcolsep}{3pt}
\begin{table}[ht!]
\centering
\caption{Quantitative comparison for alanine dipeptide across CG resolutions and baseline models. CG-BGs results are reported after reweighting; \emph{Biased} denotes flow trained on WT-MetaD datasets, while $\text{PMF}_U$ indicates PMFs learned from long unbiased MD data. Flow proposal results are provided in Tab.~\ref{tab:ablation_vertical}.} 
\label{tab:alanine_results} 
\begin{tabular}{lccc}
\toprule
\textbf{Model} & \textbf{JS} ($\downarrow$) & \textbf{PMF} ($\downarrow$) & \textbf{ESS} ($\uparrow$) \\
\midrule

& \multicolumn{3}{c}{CG-BGs (Reweighted)} \\
\cmidrule(lr){2-4}

Heavy Atom
& \textbf{0.0048(1)}
& \textbf{0.2005(63)}
& 0.5112(4) \\

Heavy Atom (Biased)
& 0.0063(1)
& 0.2277(66)
& 0.4115(4) \\

Core Beta
& 0.0052(1)
& 0.2210(65)
& \textbf{0.5528(4)} \\

Core Beta (Biased)
& 0.0057(1)
& 0.2093(58)
& 0.4818(4) \\

Heavy Atom ($\text{PMF}_U$)
& 0.0050(1)
& 0.2131(68)
& 0.5196(4) \\

\midrule

& \multicolumn{3}{c}{Implicit Solvent Baselines} \\
\cmidrule(lr){2-4}

GB (OBC1)
& 0.0157(2)
& 0.3709(92)
& -- \\

GB (OBC2)
& 0.0182(2)
& 0.4028(95)
& -- \\

\bottomrule
\end{tabular}

\end{table}
}

\begin{figure}[!ht]
  \centering
  \includegraphics[width=0.48\textwidth]{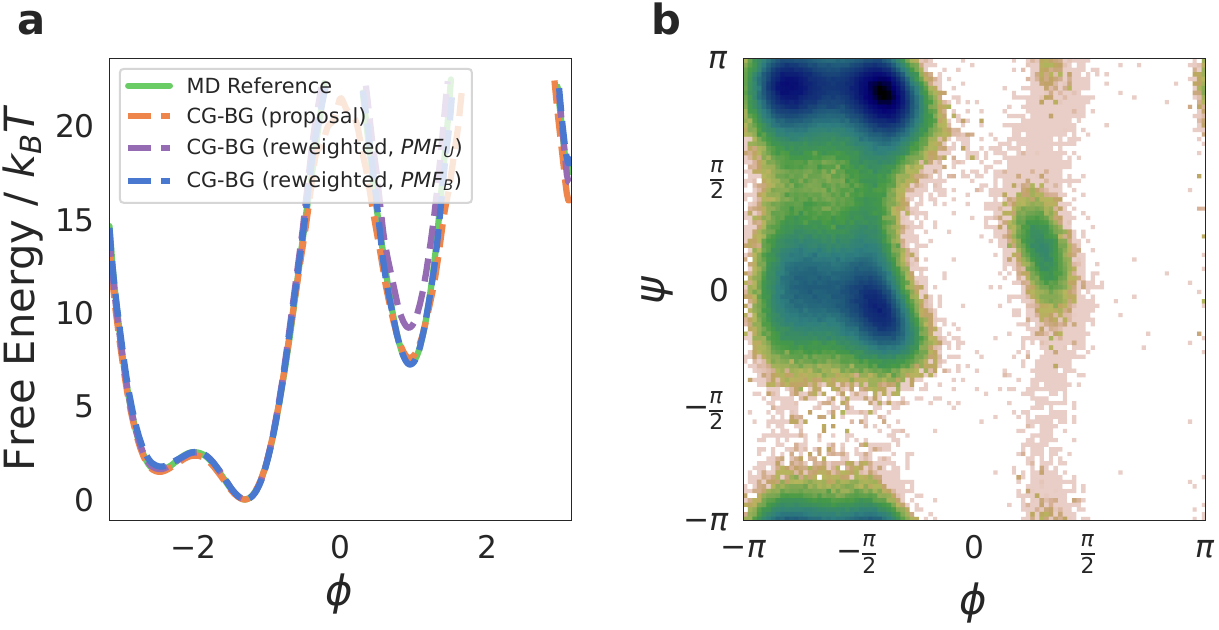}
    \caption{
    \textbf{Simulation-free benchmarking of learned CG PMFs using CG-BGs on alanine dipeptide (Heavy Atom).} 
    (a) $\phi$ free energy profile after reweighting with PMFs trained on unbiased ($\text{PMF}_U$) and rapidly converged biased datasets ($\text{PMF}_B$), compared with the MD reference and flow proposal (trained on unbiased data). 
    (b) Ramachandran plot reweighted using $\mathrm{PMF}_U$.
    }
  \label{fig:ala2_density}
\end{figure}

\begin{table*}[ht!]
\centering
\caption{
Quantitative comparison for alanine tripeptide and hexapeptide across CG resolutions and baseline models. CG-BGs results are reported after reweighting. Flow proposal results are provided in Tab.~\ref{tab:ablation_vertical_ala3} and Tab.~\ref{tab:ablation_vertical_ala6}.
}
\label{tab:ala3_ala6_results}

\begin{tabular}{lccc@{\hspace{6mm}}ccc}
\toprule

\multirow{2}{*}{\textbf{Model}}
& \multicolumn{3}{c}{\textbf{Alanine Tripeptide}}
& \multicolumn{3}{c}{\textbf{Alanine Hexapeptide}} \\

\cmidrule(lr){2-4}
\cmidrule(lr){5-7}

& \textbf{JS} ($\downarrow$)
& \textbf{PMF} ($\downarrow$)
& \textbf{ESS} ($\uparrow$)

& \textbf{JS} ($\downarrow$)
& \textbf{PMF} ($\downarrow$)
& \textbf{ESS} ($\uparrow$)
\\

\midrule

Core Beta
& 0.0060(1)
& 0.2112(51)
& \textbf{0.4212(5)}

& 0.0100(1)
& 0.3646(81)
& 0.1231(3) \\

Heavy Atom
& \textbf{0.0056(1)}
& \textbf{0.1957(52)}
& 0.3201(4)

& ---
& ---
& --- \\

\midrule

\multicolumn{7}{c}{Implicit Solvent Baselines} \\
\cmidrule(lr){1-7}

GB (OBC2)
& 0.0932(3)
& 1.0274(65)
& ---

& 0.1652(3)
& 1.8401(70)
& --- \\

\bottomrule
\end{tabular}

\end{table*}

\section{Related Work}
\xhdr{Generative models for equilibrium molecular sampling}
Generative models have emerged as powerful tools for sampling Boltzmann distributions~\cite{olsson2026generative,timewarp,rotskoff2024sampling, aranganathan2025modeling, janson2025generation, xie2026enhanced}. Subsequent work has improved BGs in various ways~\cite{von2025learning, schebek2025scalable, ouyang2026diffusive}, including the incorporation of inductive biases~\cite{equivariantflowsforsymmetricdesnities}, improved transferability across chemical and thermodynamic conditions~\cite{transbgs,temperaturesteerableflowsboltzmann,moqvist2025thermodynamic,invernizzi2022skipping}, and more scalable architectures and likelihood estimators~\cite{tanScalableEquilibriumSampling2025,zhai2024normalizing, gloy2025hollowflow,rehman2025falcon,peng2025flow}. Several approaches address the resolution gap between coarse-grained and atomistic representations through backmapping~\cite{chennakesavalu2023ensuring,hummerich2025split,wang2022generative} or other reconstruction strategies~\cite{schopmans2024conditional,stuppEnergyBasedCoarseGrainingMolecular2025}. Closely related to our setting, \citet{tamagnone2024coarse} combine a normalizing flow over collective variables with nonequilibrium dynamics to evolve the remaining degrees of freedom. \citet{kohler2023flow} instead use a normalizing flow to model the coarse-grained distribution and generate configurations and forces for training coarse-grained MLPs via force matching.
Another active line of research considers neural samplers for sampling unnormalized densities~\cite{akhound2024iterated,flowannealedimportancesampling,he2025no,potaptchik2025tilt,liu2025adjoint}, which also have applications in molecular systems~\cite{nam2025enhancing,havensAdjointSamplingHighly2025a, blessing2026bridge}.

\xhdr{Coarse-grained machine learning potentials}
The development of CG MLPs can be viewed as an extension of broader effort to construct accurate machine learning interatomic potentials from first-principles calculations.~\cite{,batzner20223, batatia2022mace, unke2021machine}. Beyond \emph{bottom-up} approaches~\cite{jinBottomupCoarseGrainingPrinciples2022,noidPerspectiveCoarsegrainedModels2013}, CG potentials can also be parameterized using \emph{top-down} methods that reproduce macroscopic observables or experimental measurements~\cite{marrink2007martini,thalerLearningNeuralNetwork2021,fuchs2025refining}. Recent work has explored different machine learning approaches for learning CG MLPs~\cite{zhangDeePCGConstructingCoarsegrained2018,wang2019machine,kohler2023flow,artsTwoOneDiffusion2023,plainerConsistentSamplingSimulation2025}. Despite these advances, learning transferable and computationally efficient CG MLPs remains challenging~\cite{charron2025navigating,mirarchi2024amaro,Majewski2023, durumeric2023machine}. A limitation of standard force matching objectives is their strong reliance on large amounts of converged simulation data. ESFM~\cite{chen2025enhanced} addresses this by learning from rapidly converged biased simulations. Alternative strategies include constrained MD approaches for approximating mean forces prior to training~\cite{ciccotti2005blue,park2026scaling,fan2026nep}.

\section{Conclusion}
This work introduces CG-BGs, a scalable framework for equilibrium sampling of coarse-grained molecular systems. By targeting the marginal equilibrium distribution defined by the PMF, CG-BGs reduce the effective dimensionality of the sampling problem while retaining asymptotic correctness through importance reweighting. The underlying PMF can be learned from rapidly converged data using enhanced sampling force matching, providing a correction mechanism for existing CG Boltzmann Emulators. Even at high levels of coarse-graining, CG-BGs capture solvent-mediated and many-body effects, and enable one-shot, simulation-free evaluation of CG MLPs.

\xhdr{Limitations} The current approach uses predefined collective variables for coarse-graining and enhanced sampling, which may be nontrivial to identify for complex systems. Recent advances in collective variable discovery~\cite{zhang2024flow,ribeiro2018reweighted,chen2018molecular,herringer2023permutationally,mehdi2024enhanced} and uncertainty quantification~\cite{zaverkin2024uncertainty,musil2019fast} provide promising avenues to address these challenges.

\xhdr{Future work} Extending CG-BGs to larger, more complex systems is a natural next step, leveraging demonstrated transferability of generative models~\cite{didi2026scaling, antoniadis2026protein} and MLPs~\cite{wood2025family, kabylda2025molecular}. More broadly, advances in exact-likelihood generative modeling, including autoregressive architectures~\cite{rehman2026autoregressive, yu2026card}, as well as improvements in atomistic Boltzmann Generators, can be readily transferred to the CG setting. The simulation-free evaluation method introduced here also opens the possibility of elucidating the design space of coarse-grained machine learning potentials. It could enable systematic assessment of how architectural choices, parameterizations, and training objectives influence PMF accuracy. In contrast to conventional pipelines that rely on validation loss or repeated MD simulations, this approach could provide direct assessment of model quality via observable level comparisons. Finally, while CG-BGs are trained from simulation data in this work, one could also explore energy-based training or more general neural sampler formulations using the learned PMFs as unnormalized targets.






\section*{Acknowledgements}
We thank Simon Olsson, Franz Görlich, Nuno Costa, and Paul Fuchs for fruitful discussions and helpful feedback. This work was funded by the European Union through the ERC (StG SupraModel) - 101077842. Views and opinions expressed are however those of the author(s) only and do not necessarily reflect those of the European Union or the European Research Council Executive Agency. Neither the European Union nor the granting authority can be held responsible for them.

\section*{Impact Statement}
This work focuses on sampling from Boltzmann distributions, a problem of broad interest in machine learning and AI for Science, with applications in both statistical physics and molecular simulations. We introduce Coarse-Grained Boltzmann Generators, which can be trained on molecular systems and applied to tasks such as drug and material discovery. While we do not anticipate immediate negative impacts, we encourage careful consideration when scaling these methods to prevent potential misuse.

\nocite{langley00}

\bibliography{ref}
\bibliographystyle{icml2026}

\newpage
\appendix
\onecolumn
\section{Proofs}
\subsection{Proof of Proposition \ref{prop:error}}\label{proof:bound}

\begin{mdframed}[style=MyFrame2]
\propcom*
\end{mdframed}

\begin{proof}
The Kullback-Leibler divergence is defined as:
\begin{equation}
    \mathcal{D}_{\mathrm{KL}}(p_\eta \| p^*) = \int p_\eta(\textbf{R}) \log \frac{p_\eta(\mathbf{R})}{p^*(\mathbf{R})} d\mathbf{R}.
\end{equation}
The Fisher Divergence (or relative Fisher information) between $p_\eta$ and $p^*$ is defined as:
\begin{equation}
    \mathcal{J}(p_\eta \| p^*) = \int p_\eta(\mathbf{R}) \left\| \nabla \log p_\eta(\mathbf{R}) - \nabla \log p^*(\mathbf{R}) \right\|^2 d\mathbf{R}.
\end{equation}
Since $p_\eta(\mathbf{R}) = Z_\eta^{-1} e^{-\beta U_\eta(\mathbf{R})}$ and $p^*(\mathbf{R}) = (Z^*)^{-1} e^{-\beta U^*(\mathbf{R})}$, the gradients of the log-densities are directly proportional to the forces:
\begin{equation}
    \nabla \log p_\eta(\mathbf{R}) = -\beta \nabla U_\eta(\mathbf{R}), \quad \nabla \log p^*(\mathbf{R}) = -\beta \nabla U^*(\mathbf{R}).
\end{equation}
Substituting these into the definition of the Fisher Divergence:
\begin{align}
    \mathcal{J}(p_\eta \| p^*) &= \int p_\eta(\mathbf{R}) \left\| (-\beta \nabla U_\eta(\mathbf{R})) - (-\beta \nabla U^*(\mathbf{R})) \right\|^2 d\mathbf{R} \\
    &= \beta^2 \int p_\eta(\mathbf{R}) \left\| \nabla U_\eta(\mathbf{R}) - \nabla U^*(\mathbf{R}) \right\|^2 d\mathbf{R} \\
    &= \beta^2 \mathbb{E}_{p_\eta} \left[ \| \mathcal{F}_\eta(\mathbf{R}) - \mathcal{F}^*(\mathbf{R}) \|^2 \right].
\end{align}
We assume that the target distribution $p^*$ satisfies a Logarithmic Sobolev Inequality (LSI) with constant $\rho > 0$. By definition, this inequality implies that for any distribution $p_\eta$ absolutely continuous with respect to $p^*$:
\begin{equation}
    \mathcal{D}_{\mathrm{KL}}(p_\eta \| p^*) \le \frac{1}{2\rho} \mathcal{J}(p_\eta \| p^*).
\end{equation}
Substituting our expression for the Fisher Divergence into the LSI yields the final bound:
\begin{equation}
    \mathcal{D}_{\mathrm{KL}}(p_\eta \| p^*) \le \frac{1}{2\rho} \left( \beta^2 \mathbb{E}_{p_\eta} \left[ \| \nabla U_\eta(\mathbf{R}) - \nabla U^*(\mathbf{R}) \|^2 \right] \right).
\end{equation}
Dividing out the constants concludes the proof.
\end{proof}

\subsection{Proof of Proposition \ref{prop:fiber}}\label{proof:fiber}
\begin{mdframed}[style=MyFrame2]
\fiber*
\end{mdframed}
\begin{proof}
Let $p(\mathbf{r}) = Z^{-1} e^{-\beta u(\mathbf{r})}$ be the unbiased equilibrium distribution. By definition, the unbiased marginal distribution is
\begin{equation}
    p(\mathbf{R}) = \int p(\mathbf{r}) \delta(\Xi(\mathbf{r}) - \mathbf{R}) d\mathbf{r} = \frac{1}{Z} \int e^{-\beta u(\mathbf{r})} \delta(\Xi(\mathbf{r}) - \mathbf{R}) d\mathbf{r}.
\end{equation}
Rearranging this yields the identity for the unnormalized marginal:
\begin{equation}
    \label{eq:marginal_identity}
    \int e^{-\beta u(\mathbf{r})} \delta(\Xi(\mathbf{r}) - \mathbf{R}) d\mathbf{r} = Z p(\mathbf{R}).
\end{equation}

Now, consider the biased distribution $p_V(\mathbf{r}) = Z_V^{-1} e^{-\beta (u(\mathbf{r}) + V(\Xi(\mathbf{r})))}$. The biased marginal distribution is:
\begin{align}
    p_V(\mathbf{R}) &= \int \frac{e^{-\beta u(\mathbf{r})} e^{-\beta V(\Xi(\mathbf{r}))}}{Z_V} \delta(\Xi(\mathbf{r}) - \mathbf{R}) d\mathbf{r} \\
    &= \frac{e^{-\beta V(\mathbf{R})}}{Z_V} \underbrace{\int e^{-\beta u(\mathbf{r})} \delta(\Xi(\mathbf{r}) - \mathbf{R}) d\mathbf{r}}_{= Z p(\mathbf{R}) \text{ (from Eq.~\ref{eq:marginal_identity})}} \\
    &= \frac{Z}{Z_V} e^{-\beta V(\mathbf{R})} p(\mathbf{R}).
\end{align}

Finally, substituting this into the definition of the conditional distribution:
\begin{equation}
    p_V(\mathbf{r} \mid \mathbf{R}) 
    = \frac{p_V(\mathbf{r}) \delta(\Xi(\mathbf{r}) - \mathbf{R})}{p_V(\mathbf{R})}
    = \frac{Z_V^{-1} e^{-\beta u(\mathbf{r})} e^{-\beta V(\mathbf{R})} \delta(\dots)}{\frac{Z}{Z_V} e^{-\beta V(\mathbf{R})} p(\mathbf{R})}
    = \frac{e^{-\beta u(\mathbf{r})} \delta(\dots)}{Z p(\mathbf{R})}
    = p(\mathbf{r} \mid \mathbf{R}).
\end{equation}
\end{proof}


\section{Datasets}\label{datasets}
\subsection{M\"uller-Brown Potential}
\xhdr{Potential Parameters}
For the two-dimensional toy system, we use a Müller--Brown potential defined as
\begin{equation}
    u(x,y) = u_1(x,y) + u_2(x,y) + u_3(x,y) + u_4(x,y),
\end{equation}
with~\cite{rajaActionMinimizationMeetsGenerative2025}
\begin{align*}
    u_1(x,y) &= -17.3 \, \exp\left[-0.0039 (x - 48)^2 - 0.0391 (y - 8)^2 \right], \\
    u_2(x,y) &= -8.7 \, \exp\left[-0.0039 (x - 32)^2 - 0.0391 (y - 16)^2 \right], \\
    u_3(x,y) &= -14.7 \, \exp\left[-0.0254 (x - 24)^2 + 0.043 (x - 24)(y - 32) - 0.0254 (y - 32)^2 \right], \\
    u_4(x,y) &= \;\;1.3 \, \exp\left[0.00273 (x - 16)^2 + 0.0023 (x - 16)(y - 24) + 0.00273 (y - 24)^2 \right].
\end{align*}
\xhdr{Umbrella Sampling}
For umbrella sampling, we introduce a biasing potential along the $x$ coordinate,
\begin{equation*}
    V_x(x) = -4 \, \exp\left[-\frac{(x - 32.0)^2}{2 \cdot 5^2}\right].
\end{equation*} This enables better sampling of the configurations around $\mathbf{x}_0=32$, allowing a better representation of transition regions that are otherwise rarely visited in unbiased trajectories.

\xhdr{Simulation Details}\label{apx:mb_dataset}
For the MB dataset generation, we perform two-dimensional Langevin dynamics with a time step of $0.1$, mass $m=1.0$, friction coefficient $\gamma=0.1$, and temperature $k_\mathrm{B}T=1.0$. Ten independent trajectories of length $10^7$ steps are generated, with initial positions sampled uniformly from $[10,50]^2$ and initial velocities drawn from a Gaussian distribution with standard deviation $0.1$. Configurations are recorded every 10 steps. The dynamics follow
\begin{equation}
    m \ddot{\mathbf{r}} = - \nabla ( u(\mathbf{r}) + V(x)) - \gamma m \dot{\mathbf{r}} + \sqrt{2 \gamma k_\mathrm{B}T m}\, \boldsymbol{\eta}(t),
\end{equation}
where $V(x)$ is the applied bias, and $\boldsymbol{\eta}(t)$ denotes Gaussian white noise. For each saved configuration, unbiased forces from $\nabla u$ are computed and stored.

\subsection{Alanine Peptides}\label{apx:alanine}
\xhdr{Force Fields}
For the explicit solvent dataset, alanine peptide systems are parameterized using the AMBER99SB-ILDN force field~\cite{lindorff2010improved} and solvated in a cubic box of TIP3P water molecules. Explicit solvent simulations of alanine dipeptide are carried out with \texttt{GROMACS}~\cite{van2005gromacs}, while all remaining peptide systems are simulated using \texttt{OpenMM}~\cite{eastman2023openmm}. For implicit solvent, the same force field is used together with the generalized Born (OBC1/OBC2) model, and all simulations are performed using \texttt{OpenMM}.

\xhdr{Well-Tempered Metadynamics}
We perform well-tempered metadynamics (WT-MetaD)~\cite{barducci2008well} simulations of alanine peptides in explicit solvent using \texttt{GROMACS} coupled with \texttt{PLUMED}~\cite{bonomi2009plumed}. The backbone dihedral angles $\phi$ (C--N--C$_\alpha$--C) and $\psi$ (N--C$_\alpha$--C--N) are chosen as collective variables. Gaussian hills with height $1.2\,\text{kJ/mol}$ and width $0.35\,\text{rad}$ are deposited every 500 integration steps. Datasets are generated with bias factors $\gamma = 1.5$ and $\gamma = 9$, where $\gamma = 1.5$ is used to train flow model for biased dipeptide proposal generation and $\gamma=9$ is used for enhanced sampling force matching. Positions and forces are recorded. To ensure unbiased force labels, all forces are recomputed by rerunning the saved trajectories in \texttt{GROMACS} without the metadynamics bias using the \texttt{mdrun -rerun} functionality.
This guarantees that each configuration is associated with forces from the underlying unbiased potential.

For WT-MetaD, the backbone dihedral pair $(\phi,\psi)$ is used as collective variables for the dipeptide system. For the tripeptide and hexapeptide systems, biasing is applied only to the specific dihedral pair of interest, which is the second pair (of three) for alanine tripeptide and the third pair (of six) for alanine hexapeptide, counting from the N-methyl terminus. As a result, convergence is primarily enforced along the targeted collective variables, which we find sufficient for accurate estimation of the PMF along the corresponding degrees of freedom in this work. More general biasing schemes or longer simulations may further improve sampling of the remaining degrees of freedom, and thereby improve the accuracy of the global PMF.

\xhdr{Simulation Details}
All simulations are performed in the NVT ensemble at a temperature of 300~K, with a time step of $0.5\,\text{fs}$ and no bond constraints. After energy minimization, production 
dynamics are carried out using a velocity-rescale thermostat (time constant 0.1~ps). Long-range 
electrostatics are treated using the particle mesh Ewald method, and 
van der Waals interactions are truncated at 1.0~nm. We summarize the dataset configurations used in this work in Tab.~\ref{tab:ala_sim}.
\begin{table}[h]
\centering
\caption{Overview of simulation details for alanine peptide datasets.}
\begin{tabular}{l l l c c c c}
\toprule
\textbf{System} & \textbf{Solvent} & \textbf{Dataset} & \textbf{Method} & \textbf{Length}\\
\midrule
\multirow{5}{*}{Dipeptide}
& Explicit (TIP3P)
& Unbiased MD
& None & 500 ns \\

& Explicit (TIP3P)
& Biased MD (CNF)
& WT-MetaD ($\gamma=1.5$) & 10 ns \\

& Explicit (TIP3P)
& Biased MD (PMF)
& WT-MetaD ($\gamma=9$) & 10 ns \\

& Implicit (OBC1)
& Unbiased MD
& None & 500 ns \\

& Implicit (OBC2)
& Unbiased MD
& None & 500 ns \\

\midrule
\multirow{3}{*}{Tripeptide}
& Explicit (TIP3P)
& Unbiased MD
& None & 1000 ns \\

& Explicit (TIP3P)
& Biased MD (PMF)
& WT-MetaD ($\gamma=9$) & 50 ns \\

& Implicit (OBC2)
& Unbiased MD
& None & 1500 ns  \\

\midrule

\multirow{3}{*}{Hexapeptide}
& Explicit (TIP3P)
& Unbiased MD
& None & 1500 ns  \\

& Explicit (TIP3P)
& Biased MD (PMF)
& WT-MetaD ($\gamma=9$) & 100 ns  \\

& Implicit (OBC2)
& Unbiased MD
& None & 1500 ns \\
\bottomrule
\end{tabular}

\label{tab:ala_sim}
\end{table}

\section{Experimental Details for Conditional Normalizing Flows}\label{cnf_experiments}

\subsection{Architecture}
\xhdr{M\"uller-Brown Potential} For MB potential, we use a multilayer perceptron augmented with time conditioning for flow matching. The network consists of three hidden layers with width 96. Flow time is embedded into a 16-dimensional time embedding and concatenated with the input.

\xhdr{Alanine Peptides} For peptides, we use an adapted Graph Transformer architecture~\cite{plainerConsistentSamplingSimulation2025, artsTwoOneDiffusion2023, shi2020masked}. Given bead positions $\mathbf{x}_i$ and bead features $\mathbf{h}_i$, edge attributes are constructed as
\begin{equation}
\mathbf{d}_{ij} = \mathbf{x}_i - \mathbf{x}_j, \quad r_{ij} = \lVert \mathbf{d}_{ij} \rVert, \quad \mathbf{e}_{ij} = [\mathbf{d}_{ij}, r_{ij}],
\end{equation}
and node attributes are initialized as
\begin{equation}
\mathbf{n}^{(0)}_i = [\mathbf{h}_i, \mathbf{x}_i, t],
\end{equation}
where $t$ denotes the flow time. The bead features and time are embedded into 16- and 4-dimensional vectors, respectively, concatenated with positions, and projected to 128-dimensional node embeddings via a linear layer. Edge attributes are also embedded into 128 dimensions. The model consists of three Graph Transformer layers with 8 attention heads and a head dimension of 64. To enforce rotational equivariance, random global rotations are applied to molecular configurations during training as data augmentation~\cite{abramson2024accurate}. Additionally, translational equivariance is guaranteed by moving the center of mass to the origin and adding noise to the center of mass to lift the data dimensionality back.~\cite{tanScalableEquilibriumSampling2025}.

\subsection{Training Configuration}\label{apx:cnf_training}
All models are trained using AdamW with weight decay $10^{-5}$ and a cosine learning-rate schedule decreasing from $3\times10^{-4}$ to $1\times10^{-5}$. For the Müller--Brown potential, training is performed for 1000 epochs with batch size 256 using 20k training samples. For alanine dipeptide, models are trained for 5000 epochs with batch size 1024 using 50k samples. For alanine tripeptide and hexapeptide, models are trained for 10000 epochs with batch size 1024 using 200k samples. We find no consistent benefit from exponential moving average (EMA) of model parameters and therefore do not employ it in our experiments.

\subsection{Inference}\label{apx:cnf_inference}
As Hutchinson’s trace estimator introduces bias for BGs~\cite{tanScalableEquilibriumSampling2025, peng2025flow}, we compute the divergence exactly using automatic differentiation. For inference, we use the Dormand–Prince 5(4) method (\texttt{dopri5}) with absolute and relative tolerances set to $10^{-5}$.  Inference is performed with a batch size of 500.

\subsection{Computational Cost}
The reported inference times (Tab.~\ref{tab:apx_cnf_cost}) correspond to generating $10^4$ samples. All other training and inference parameters are provided in §\ref{apx:cnf_training} and §\ref{apx:cnf_inference}.

\begin{table}[H]
\centering
\caption{
Training and inference time for CFM across different systems and settings. Inference times correspond to $10^4$ generated samples. Results marked with $^\dagger$ are taken from \cite{rehman2025falcon} and are included for reference. Reported runtimes should be interpreted in light of differences in hardware, implementation details, and training and inference batch sizes. Note also that peptide nomenclature differs between the two works due to different conventions for counting terminal capping groups: our alanine tripeptide and hexapeptide correspond to the alanine tetrapeptide and heptapeptide systems reported in \cite{rehman2025falcon}, respectively.
}
\label{tab:apx_cnf_cost}

\begin{tabular}{lccc@{\hspace{6mm}}ccc}
\toprule

\multirow{2}{*}{\textbf{Model}}
& \multicolumn{3}{c}{\textbf{Alanine Dipeptide}}
& \multicolumn{3}{c}{\textbf{Alanine Tripeptide}} \\

\cmidrule(lr){2-4}
\cmidrule(lr){5-7}

& \textbf{Training}
& \textbf{Inference}
& \textbf{Total}

& \textbf{Training}
& \textbf{Inference}
& \textbf{Total}
\\

\midrule

Core Beta
& 0.45h
& 0.95min
& 0.47h

& 7.32h
& 6.47min
& 7.43h \\

Heavy Atom
& 0.80h
& 3.78min
& 0.86h

& 17.63h
& 14.22min
& 17.87h \\

All Atom
& 2.55h
& 14.91min
& 2.80h

& ---
& ---
& --- \\

\midrule

ECNF++$^\dagger$
& ---
& ---
& 12.52h

& ---
& ---
& 32.17h \\

SBG$^\dagger$
& ---
& ---
& 16.83h

& ---
& ---
& 41.67h \\

DiT CNF$^\dagger$
& ---
& ---
& 9.56h

& ---
& ---
& 24.10h \\

FALCON$^\dagger$
& ---
& ---
& 7.65h

& ---
& ---
& 25.76h \\

\bottomrule
\end{tabular}

\vspace{4mm}

\begin{tabular}{lccc}
\toprule

\multirow{2}{*}{\textbf{Model}}
& \multicolumn{3}{c}{\textbf{Alanine Hexapeptide}} \\

\cmidrule(lr){2-4}

& \textbf{Training}
& \textbf{Inference}
& \textbf{Total}
\\

\midrule

Core Beta
& 23.01h
& 20.07min
& 23.34h \\

\bottomrule
\end{tabular}

\end{table}

\section{Experimental Details for Force Matching}\label{fm_experiments}
\subsection{Architecture}
\xhdr{M\"uller-Brown Potential}
For the MB potential, we use a radial basis function (RBF) feature map followed by a multilayer perceptron. The RBF expansion has $K=100$ centers, initialized uniformly in $[10,50]^2$ and optimized during training, with a fixed width $\sigma = 5.0$. The features are passed through four fully connected layers of size 128 with softplus activation, followed by a linear output layer.

\xhdr{Alanine Peptides}
For peptides, the CG potential $U_\eta(\mathbf{R})$ is parameterized using the MACE architecture~\cite{batatia2022mace}, an equivariant message-passing graph neural network. Each CG bead is represented as a node in a geometric graph, with edges connecting neighbors within a cutoff radius and encoding relative position vectors. The model uses hidden irreducible representations of $32 \times 0e + 32 \times 1o$, processed through two interaction layers with correlation order 3 and an angular momentum expansion truncated at $\ell_{\rm max}=3$. Node features are decoded by a readout layer ($16 \times 0e$) into a scalar energy prediction ($1 \times 0e$). Periodic displacement functions are applied during graph construction to handle boundary conditions correctly.

\subsection{Training Configuration}\label{apx:fm_training}
For MB, training uses the Adam optimizer (via Optax) with a constant learning rate of $10^{-4}$ and batch size 128 for 500 epochs. For peptides, training uses Adam with exponential learning rate decay ($\eta_0 = 10^{-3}$, decay rate 0.01), batch size 256, and 300 epochs. We use 500k training samples for alanine dipeptide and 1000k training samples for alanine tripeptide and hexapeptide across all CG mappings. Training and validation splits are with a 90/10 ratio. Gradients are clipped to a global norm of 1.0. Validation losses in CG force matching are often noisy and do not consistently correlate with potential quality. We use the final training checkpoint for inference in all experiments.

\subsection{Computational Cost}
The reported inference times (Tab.~\ref{apx:vfm_cost}) correspond to evaluating $10^4$ samples. Training parameters follow §\ref{apx:fm_training}. A batch size of 500 is used for inference.
\begin{table}[H]
    \centering
    \caption{Training and inference time for MACE model on alanine dipeptide. Inference times correspond to $10^4$ samples evaluated.}
    \begin{tabular}{ccc}
        \toprule
         & \textbf{Core Beta} & \textbf{Heavy Atom}  \\
        \midrule
        Training & 1.88h & 4.11h \\
        Inference & 0.66s & 0.72s \\
        \bottomrule
    \end{tabular}
    \label{apx:vfm_cost}
\end{table}

\section{Compute Infrastructure and Software}
\subsection{Hardware}
All experiments, including model training, inference, and computational benchmarks, are performed on a single NVIDIA A100 GPU with 80~GB memory.
\subsection{Software}
CFM is implemented using \texttt{JAX}, \texttt{Diffrax}~\cite{kidger2021on}, and \texttt{Flax}~\cite{flax2020github}. Graph transformer is adapted from the implementation provided in \url{https://github.com/noegroup/ScoreMD/blob/main/src/scoremd/models/graph_transformer.py}. Training of CG PMF is carried out using \texttt{chemtrain}~\cite{fuchs2025chemtrainb} and \texttt{chemtrain-deploy}~\cite{fuchs2025chemtraina} built on \texttt{JAX, M.D.}~\cite{schoenholz2020jax}. Molecular structures are visualized using \texttt{OVITO}~\cite{stukowski2009visualization}.

\section{Algorithms}\label{algorithms}
\noindent
\begin{minipage}[t]{0.48\textwidth}
\vspace{-15pt}
\begin{algorithm}[H]
\caption{Training CG PMF via ESFM}
\label{alg:esfm}
\begin{algorithmic}
\STATE \textbf{Input:} 
Dataset $\mathcal{D}_{\mathrm{bias}}=\{(\mathbf{r},\mathcal{F}_{\mathrm{proj}}(\mathbf{r}))\}$ (Rapidly converged); 
batch size $B$

\STATE \textbf{Initialize:} CG PMF network $U_\eta$

\WHILE{not converged}

    \STATE Sample $\{(\mathbf{r}^{(i)},\mathcal{F}^{(i)}_{\mathrm{proj}})\}_{i=1}^B \sim \mathcal{D}_{\mathrm{bias}}$

    \STATE Compute $\mathbf{R}^{(i)} \leftarrow \Xi(\mathbf{r}^{(i)})$

    \STATE $\mathcal{L}_{\mathrm{ESFM}} \leftarrow 
    \frac{1}{B}\sum_{i=1}^B 
    \big\|\nabla_{\mathbf{R}} U_\eta(\mathbf{R}^{(i)}) - \mathcal{F}^{(i)}_{\mathrm{proj}}\big\|_2^2$

    \STATE Update $\eta \leftarrow \mathrm{Optim}\big(\eta,\nabla_\eta \mathcal{L}_{\mathrm{ESFM}}\big)$

\ENDWHILE

\STATE \textbf{Return} $U_\eta$
\end{algorithmic}
\end{algorithm}
\vspace{-10pt}
\begin{algorithm}[H]
\caption{Training Flow Model via CFM}
\label{alg:cfm}
\begin{algorithmic}
\STATE \textbf{Input:} 
Dataset $\mathcal{D}=\{\mathbf{r}\}$ (biased or unbiased); 
batch size $B$

\STATE \textbf{Initialize:} flow model $v_\theta$

\WHILE{not converged}

    \STATE Sample $\mathbf{r}^{(i)} \sim \mathcal{D}$

    \STATE Compute $\mathbf{R}_1^{(i)} \leftarrow \Xi(\mathbf{r}^{(i)})$

    \STATE Sample $\mathbf{R}_0^{(i)} \sim \mathcal{N}(\mathbf{0},\mathbf{I})$

    \STATE Sample $t^{(i)} \sim \mathcal{U}[0,1]$

    \STATE $\mathbf{R}_t^{(i)} \leftarrow (1-t^{(i)})\mathbf{R}_0^{(i)} + t^{(i)}\mathbf{R}_1^{(i)}$

    \STATE $u_t^{(i)} \leftarrow \mathbf{R}_1^{(i)} - \mathbf{R}_0^{(i)}$

    \STATE $\mathcal{L}_{\mathrm{CFM}} \leftarrow 
    \frac{1}{B}\sum_{i=1}^B 
    \big\|v_\theta(t^{(i)},\mathbf{R}_t^{(i)}) - u_t^{(i)}\big\|_2^2$

    \STATE Update $\theta \leftarrow \mathrm{Optim}\big(\theta,\nabla_\theta \mathcal{L}_{\mathrm{CFM}}\big)$

\ENDWHILE

\STATE \textbf{Return} $v_\theta$
\end{algorithmic}
\end{algorithm}

\end{minipage}
\hfill
\begin{minipage}[t]{0.48\textwidth}
\vspace{-15pt}
\begin{algorithm}[H]
\caption{Sampling \& Reweighting}
\label{alg:sampling}
\begin{algorithmic}
\STATE \textbf{Input:} Trained models $U_\eta$, $v_\theta$; number of samples $N$

\STATE Initialize sample set $\mathcal{X}\leftarrow\emptyset$, log-weights $\mathcal{W}\leftarrow\emptyset$

\FOR{$i=1$ to $N$}

    \STATE Sample $\mathbf{z}\sim\mathcal{N}(\mathbf{0},\mathbf{I})$


    \STATE Solve ODE $\frac{d\mathbf{R}}{dt}=v_\theta(t,\mathbf{R})$ with $\mathbf{R}(0)=\mathbf{z}$
    \STATE $\mathbf{R}^{(i)} \leftarrow \mathbf{R}(1)$

    \STATE$\Delta \ell^{(i)} \leftarrow \int_0^1 \nabla\!\cdot v_\theta(t,\mathbf{R}(t))\,dt
    $

    \STATE $\log q_\theta(\mathbf{R}^{(i)}) \leftarrow \log p_0(\mathbf{z}) - \Delta \ell^{(i)}$


    \STATE $E^{(i)} \leftarrow U_\eta(\mathbf{R}^{(i)})$
    \STATE $\log \tilde{w}^{(i)} \leftarrow -\beta E^{(i)} - \log q_\theta(\mathbf{R}^{(i)})$

    \STATE Append $\mathbf{R}^{(i)}$ to $\mathcal{X}$ and $\log \tilde{w}^{(i)}$ to $\mathcal{W}$

\ENDFOR

\STATE Weight clipping for $\mathcal{W}$
\STATE Normalize weights $\{w^{(i)}\}$ and compute ESS

\STATE \textbf{Return} samples $\mathcal{X}$, weights $\{w^{(i)}\}$, ESS
\end{algorithmic}
\end{algorithm}
\end{minipage}

\section{Additional Results}\label{additional_results}

\subsection{M\"uller-Brown Potential}
\begin{figure}[H]
  \centering
  \includegraphics[width=0.67\textwidth]{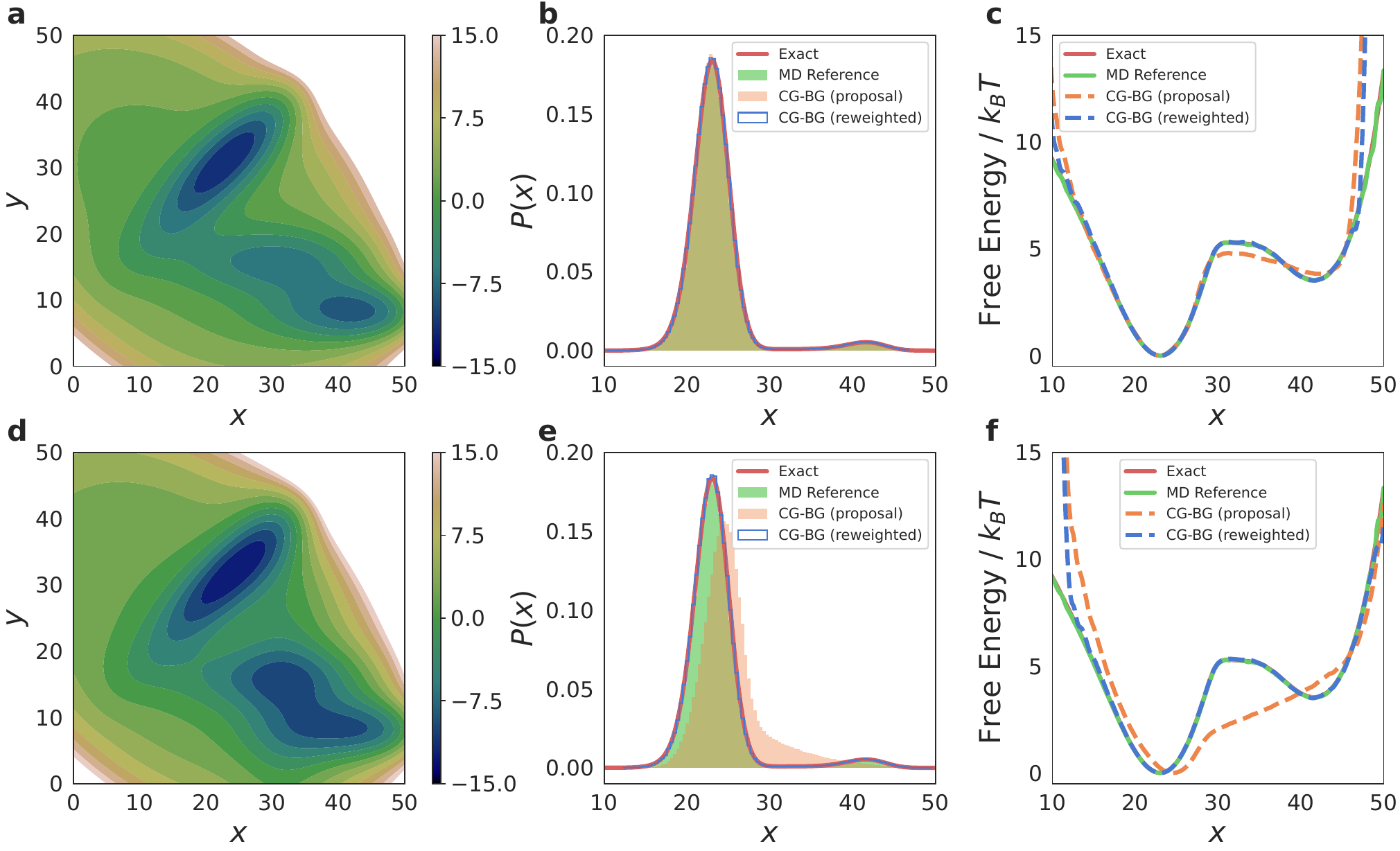}
  \caption{\small \textbf{CG-BGs on the MB potential.}
    (a) Two-dimensional unbiased MB potential energy surface (functional form in \S\ref{datasets}). 
    (b) Marginal probability density along the $x$ coordinate.
    (c) Free energy profiles before and after reweighting for CG-BGs, where flow is trained on \emph{unbiased} data, compared with the exact solution and MD reference. 
    (d-f) Same as (a-c), but for flow trained on \emph{biased} data.
}
  \label{fig:mb_pes}
\end{figure}
\subsection{Alanine Tripeptide (Core Beta mapping)}
\begin{figure}[H]
  \centering
  \includegraphics[width=0.99\textwidth]{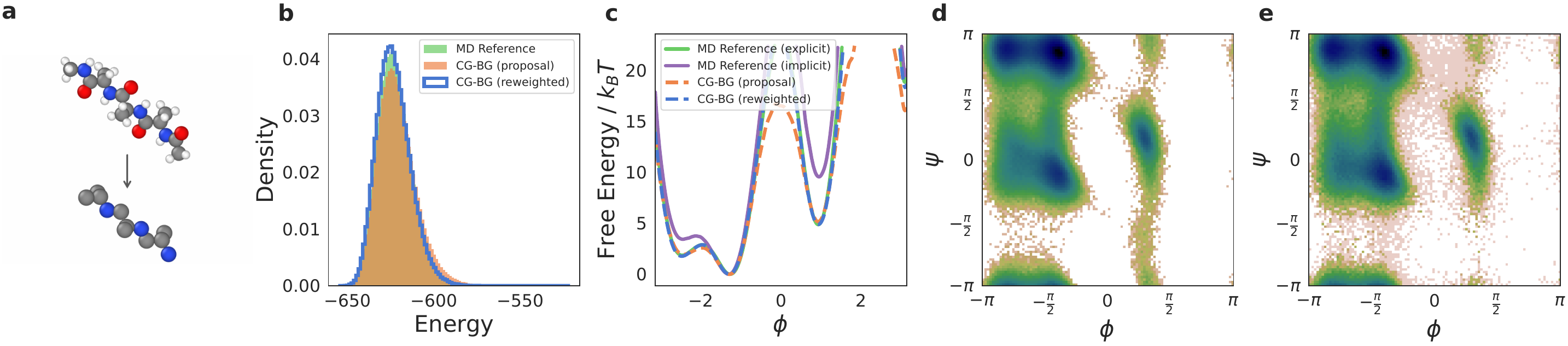}
    \caption{
    \textbf{CG-BGs on alanine tripeptide (Core Beta).} (a) Core Beta mapping. (b) Potential energy distribution under the learned PMF. (c) $\phi$ dihedral free energy profile. (d) MD reference Ramachandran plot. (e) Reweighted Ramachandran distribution obtained from CG-BGs.
    }
  \label{fig:ala3_cb_app}
\end{figure}

\subsection{Ramachandran Plots}
We show Ramachandran plots for peptides to illustrate the effect of importance reweighting (Fig.~\ref{fig:ala2_rama_app} and Fig.~\ref{fig:ala3_ala6_rama_app}), using MD reference distributions from explicit and implicit solvent simulations. The raw flow proposals exhibit noisy sampling and place probability mass in low-probability regions of the free energy landscape. These configurations receive low importance weights and therefore contribute negligibly after reweighting, yielding distributions that closely match the explicit solvent reference MD ensembles.

\begin{figure}[H]
  \centering
  \includegraphics[width=0.98\textwidth]{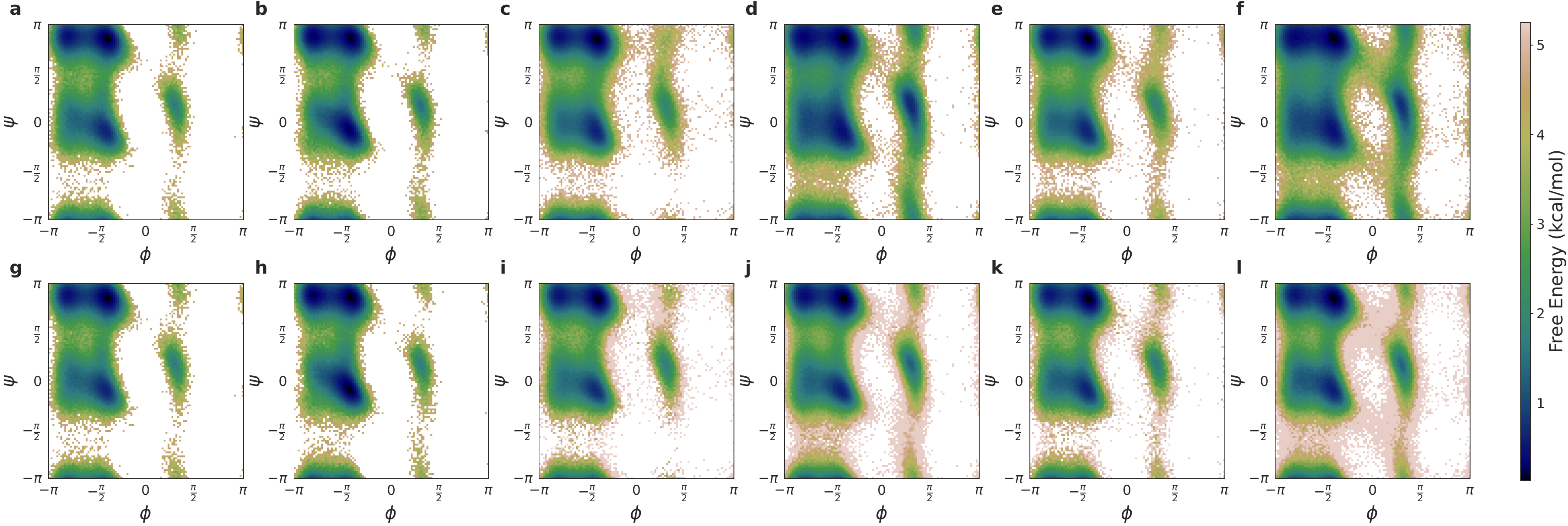}
    \caption{
    \textbf{Ramachandran plots for alanine dipeptide.}
    (a,g) Reference MD (explicit solvent). 
    (b) Implicit solvent MD (OBC1). 
    (h) Implicit solvent MD (OBC2). 
    (c) Core Beta (unbiased) proposal. 
    (d) Core Beta (WT-MetaD) proposal. 
    (e) Heavy Atom (unbiased) proposal. 
    (f) Heavy Atom (WT-MetaD) proposal. 
    (i--l) Reweighted distributions corresponding to (c--f).
}
  \label{fig:ala2_rama_app}
\end{figure}

\begin{figure}[H]
  \centering
  \includegraphics[width=0.98\textwidth]{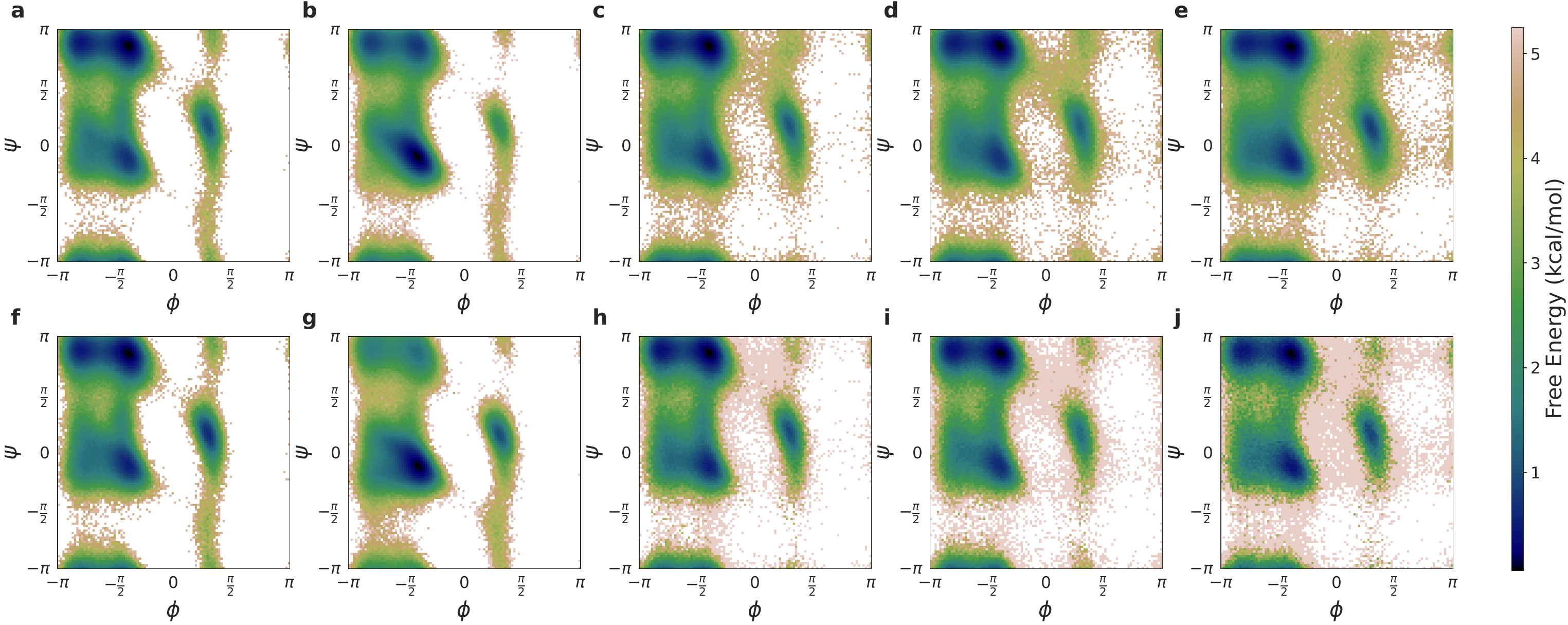}
    \caption{
    \textbf{Ramachandran plots for alanine tripeptides and hexapeptides.}
    (a,b) Reference MD for tripeptide (explicit and implicit solvent).  
    (c,d) Flow proposals for tripeptide (Core Beta and Heavy Atom).  
    (e) Flow proposal for hexapeptide (Core Beta).  
    (f,g) Reference MD for hexapeptide (explicit and implicit solvent).  
    (h--j) Reweighted distributions corresponding to (c--e).
    }
  \label{fig:ala3_ala6_rama_app}
\end{figure}

\subsection{Ablation of Weight Clipping}\label{apd:ablation} 
To stabilize importance reweighting, we apply a weight clipping strategy, discarding the top $1\%$ of samples with the largest log-weights.

As shown in Tab.~\ref{tab:ablation_vertical}, Tab.~\ref{tab:ablation_vertical_ala3} and Tab.~\ref{tab:ablation_vertical_ala6}, reweighting without clipping leads to large JS divergences and PMF errors, together with ESS, indicating severe weight degeneracy. In contrast, weight clipping restores stable estimates and substantially improves all metrics.

We further analyze sensitivity to the clipping ratio in Fig.~\ref{fig:ala_clipping}. Aggressive clipping (10–20\%) maximizes ESS but introduces bias in the reweighted distributions. Small clipping ratios (close to 0\%) preserve unbiasedness but suffer from high weights variance. Balancing this bias–variance trade-off, we choose a conservative $1\%$ clipping threshold across all experiments, which yields stable metrics while maintaining physically consistent distributions.

\begin{table}[H] 
    \centering
    \small
    \caption{
    Quantitative comparison for alanine dipeptide across CG resolutions and different weight clipping thresholds. Atomistic BG results from previous studies~\cite{tanScalableEquilibriumSampling2025} are included for reference, where models are trained on implicit solvent datasets and evaluated after reweighting using the implicit solvent energy function. Reported values are computed against the explicit solvent MD reference. For alanine dipeptide, TarFlow is trained on biased simulation data, which leads to low ESS.
    }
    \label{tab:ablation_vertical}
    \begin{tabular}{lccc} 
        \toprule
        \textbf{Model} & \textbf{JS} ($\downarrow$) & \textbf{PMF} ($\downarrow$) & \textbf{ESS} ($\uparrow$) \\
        \midrule
        \multicolumn{4}{c}{Flow Trained on \emph{Unbiased} Dataset} \\
        \midrule
        Heavy Atom Proposal             & \textbf{0.0048(1)}   & $0.2749(72)$       & --- \\
        Heavy Atom Reweighted Proposal ($1\%$ clip)    & \textbf{0.0048(1)}   & \textbf{0.2005(63)}       & $0.5112(4)$ \\
        Heavy Atom Reweighted Proposal ($0\%$ clip)    & $0.0065(1)$   & $0.2548(91)$       & $0.4038(28)$ \\
        \addlinespace[0.5em]
        Core Beta Proposal              & $0.0058(1)$   & $0.3662(85)$       & --- \\
        Core Beta Reweighted Proposal ($1\%$ clip)     & $0.0052(1)$   & $0.2210(65)$       & \textbf{0.5528(4)} \\
        Core Beta Reweighted Proposal ($0\%$ clip)     & $0.0065(1)$   & $0.2833(84)$      & $0.4592(50)$ \\
        
        \midrule
        \multicolumn{4}{c}{Flow Trained on \emph{WT-MetaD} Dataset} \\
        \midrule
        Heavy Atom Proposal             & $0.0510(3)$   & $2.5527(382)$       & --- \\
        Heavy Atom Reweighted Proposal ($1\%$ clip)    & $0.0063(1)$   & $0.2277(66)$       & $0.4115(4)$ \\
        Heavy Atom Reweighted Proposal ($0\%$ clip)    & $0.0058(1)$   & $0.2163(61)$       & $0.3603(23)$ \\
        \addlinespace[0.5em]
        Core Beta Proposal              & $0.0437(2)$   & $1.7185(294)$       & --- \\
        Core Beta Reweighted Proposal ($1\%$ clip)     & $0.0057(1)$   & $0.2093(58)$       & $0.4818(4)$ \\
        Core Beta Reweighted Proposal ($0\%$ clip)    & $0.0058(1)$   & $0.2015(55)$       & $0.3939(11)$ \\

        \midrule
        \multicolumn{4}{c}{Atomistic BGs Trained on \emph{Implicit Solvent} Dataset} \\
        \midrule
        TarFlow   & $0.0548(2)$   & $1.2696(314)$      & $0.0034(18)$ \\
        ECNF++    & $0.1451(19)$   & $21.4005(4594)$       & $0.2445(156)$ \\
        \bottomrule
    \end{tabular}
\end{table}

\begin{table}[H] 
    \centering
    \small
    \caption{Quantitative comparison for alanine tripeptide across CG resolutions and different weight clipping thresholds.}
    \label{tab:ablation_vertical_ala3}
    \begin{tabular}{lccc} 
        \toprule
        \textbf{Model} & \textbf{JS} ($\downarrow$) & \textbf{PMF} ($\downarrow$) & \textbf{ESS} ($\uparrow$) \\
        \midrule
        \multicolumn{4}{c}{Flow Trained on \emph{Unbiased} Dataset} \\
        \midrule
        Heavy Atom Proposal             & $0.0065(1)$   & $0.4071(73)$       & --- \\
        Heavy Atom Reweighted Proposal ($1\%$ clip)    & \textbf{0.0056(1)}   & \textbf{0.1957(52)}       & $0.3201(4)$ \\
        Heavy Atom Reweighted Proposal ($0\%$ clip)    & $0.0110(1)$   & $0.2926(184)$       & $0.0368(192)$ \\
        \addlinespace[0.5em]
        Core Beta Proposal             & $0.0061(1)$   & $0.3800(71)$       & --- \\
        Core Beta Reweighted Proposal ($1\%$ clip)    & $0.0060(1)$   & $0.2112(51)$       & \textbf{0.4212(5)} \\
        Core Beta Reweighted Proposal ($0\%$ clip)    & $0.0414(2)$   & $0.8451(142)$       & $0.0018(10)$ \\
        \bottomrule
    \end{tabular}
\end{table}

\begin{table}[H] 
    \centering
    \small
    \caption{Quantitative comparison for alanine hexapeptide across different weight clipping thresholds.}
    \label{tab:ablation_vertical_ala6}
    \begin{tabular}{lccc} 
        \toprule
        \textbf{Model} & \textbf{JS} ($\downarrow$) & \textbf{PMF} ($\downarrow$) & \textbf{ESS} ($\uparrow$) \\
        \midrule
        \multicolumn{4}{c}{Flow Trained on \emph{Unbiased} Dataset} \\
        \midrule
        Core Beta Proposal             & $0.0134(1)$   & $0.9801(130)$       & --- \\
        Core Beta Reweighted Proposal ($1\%$ clip)    & \textbf{0.0100(1)}   & \textbf{0.3646(81)}       & \textbf{0.1231(3)} \\
        Core Beta Reweighted Proposal ($0\%$ clip)    & $0.1972(3)$   & $8.0359(879)$       & $0.0002(3)$ \\
        \bottomrule
    \end{tabular}
\end{table}

\begin{figure}[H]
  \centering
  \includegraphics[width=0.72\textwidth]{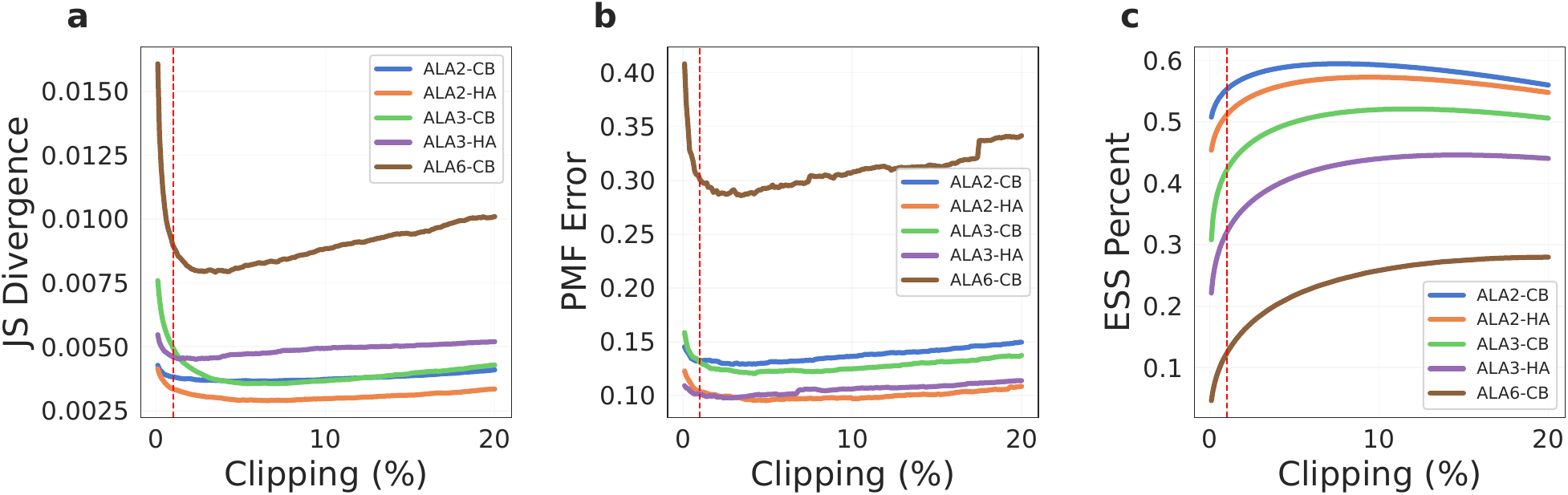}
    \caption{
    Metrics as a function of weight clipping ratio for flow models trained on unbiased datasets across different CG mappings and systems. (a) JS divergence, (b) PMF error, and (c) ESS after reweighting. The red dashed line indicates the 1\% clipping ratio used throughout our experiments.
    }
    \label{fig:ala_clipping}
\end{figure}

\subsection{Free Energy of $\psi$ Dihedral}
We provide additional free energy plots for the $\psi$ dihedral of alanine dipeptide (Fig.~\ref{fig:ala2_psi_free_energy}).

\begin{figure}[H]
  \centering
  \includegraphics[width=0.96\textwidth]{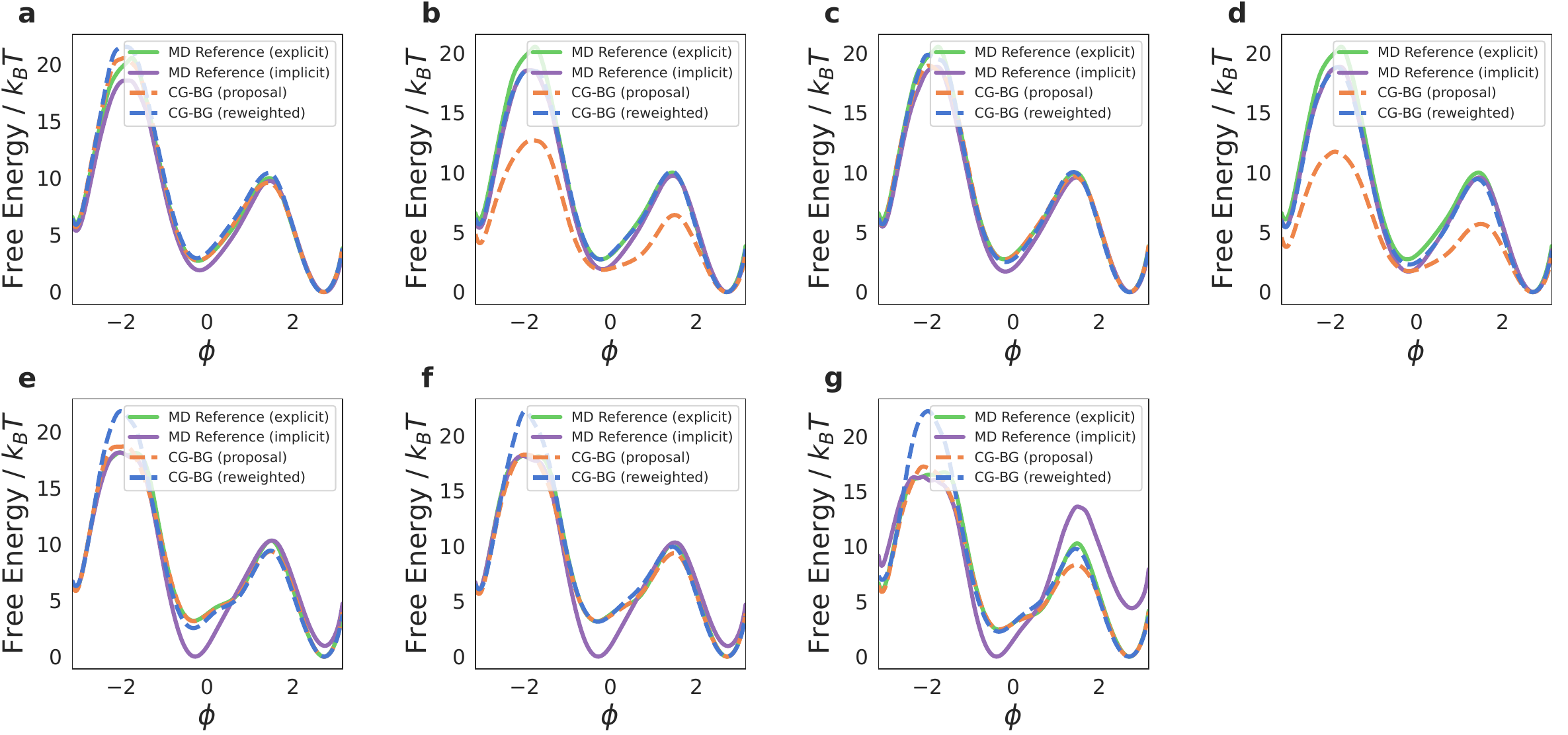}
    \caption{
    \textbf{Comparison of $\psi$ dihedral free energy profiles across training settings.}
    All panels show MD reference distributions and CG-BG proposals before and after reweighting. (a) Dipeptide Core Beta (unbiased), (b) Dipeptide Core Beta (WT-MetaD), (c) Dipeptide Heavy Atom (unbiased), (d) Dipeptide Heavy Atom (WT-MetaD), (e) Tripeptide Core Beta, (f) Tripeptide Heavy Atom, (g) Hexapeptide Core Beta.
    }
    \label{fig:ala2_psi_free_energy}
\end{figure}

\subsection{Bond Length}
Fig.~\ref{fig:ala2_bond_length} shows the C–N bond length distributions for the MD reference and CG-BG results before and after reweighting.
\begin{figure}[H]
  \centering
  \includegraphics[width=0.96\textwidth]{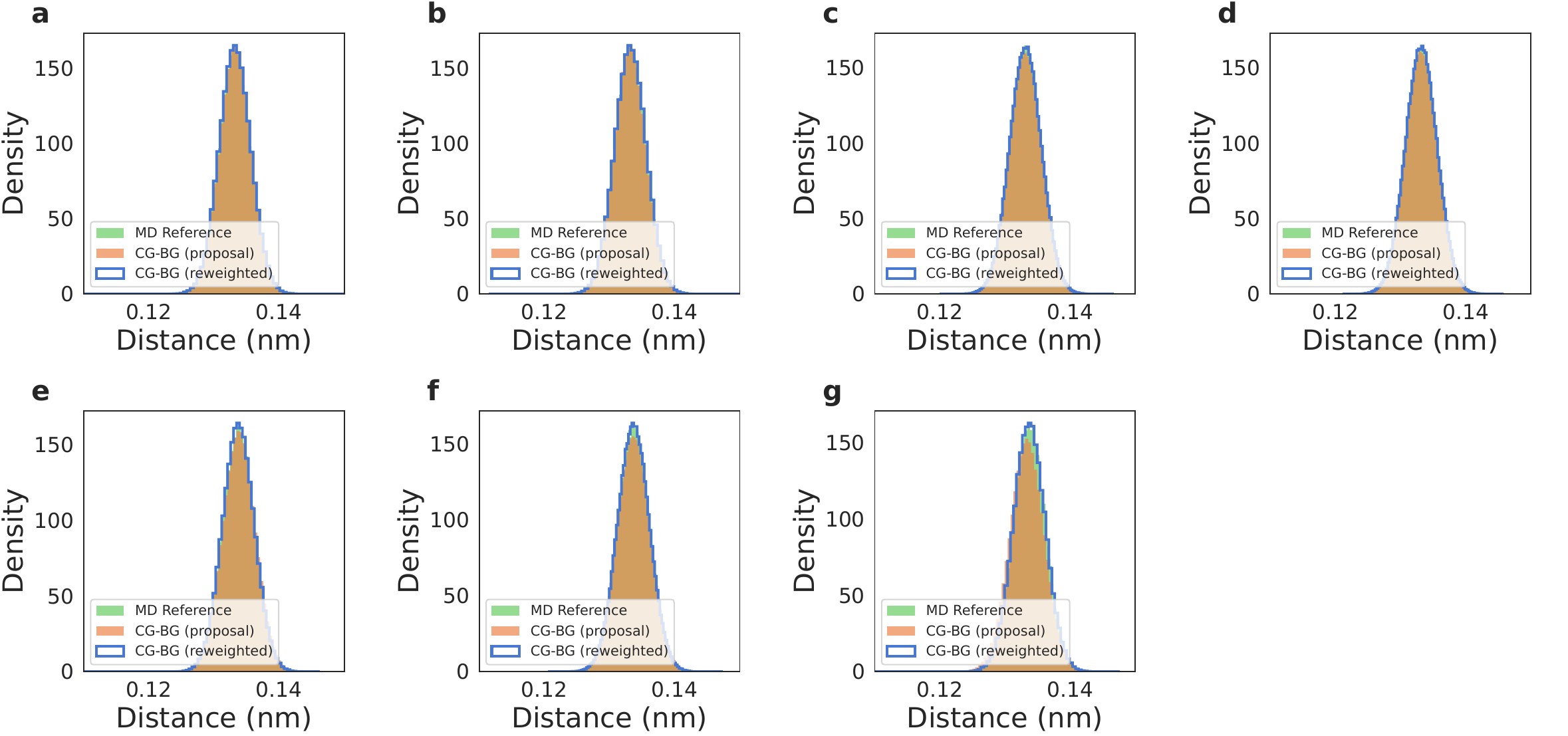}
    \caption{
    \textbf{Comparison of C--N bond length distributions across training settings.}
    All panels show MD reference distributions and CG-BG proposals before and after reweighting. (a) Dipeptide Core Beta (unbiased), (b) Dipeptide Core Beta (WT-MetaD), (c) Dipeptide Heavy Atom (unbiased), (d) Dipeptide Heavy Atom (WT-MetaD), (e) Tripeptide Core Beta, (f) Tripeptide Heavy Atom, (g) Hexapeptide Core Beta.
    }
    \label{fig:ala2_bond_length}
\end{figure}


\end{document}